\newtheorem{theorem}{Theorem}[section]
\newtheorem{assumption}[theorem]{Assumption}
\newtheorem{lemma}[theorem]{Lemma}
\DeclarePairedDelimiter{\ceil}{\lceil}{\rceil}
\title{Expert-Supervised Reinforcement Learning for Offline Policy Learning and Evaluation}
\author{%
  Aaron Sonabend-W\\
  Harvard University\\
  \texttt{asonabend@g.harvard.edu} \\
   \And
   Junwei Lu\\
   Harvard University\\
   \texttt{junweilu@hsph.harvard.edu} \\
   \And
   Leo A. Celi\\
   MIT\\
   \texttt{lceli@mit.edu} \\
   \And
   Tianxi Cai\\
   Harvard University\\
   \texttt{tcai@hsph.harvard.edu} \\
   \And
   Peter Szolovits\\
   MIT\\
   \texttt{psz@mit.edu} \\
   }
\begin{document}

\maketitle
\begin{abstract}
 Offline Reinforcement Learning (RL) is a promising approach for learning optimal policies in environments where direct exploration is expensive or unfeasible. However, the adoption of such policies in practice is often challenging, as they are hard to interpret within the application context, and lack measures of uncertainty for the learned policy value and its decisions. To overcome these issues, we propose an Expert-Supervised RL (ESRL) framework which uses uncertainty quantification for offline policy learning. In particular, we have three contributions: 1) the method can learn safe and optimal policies through hypothesis testing, 2) ESRL allows for different levels of risk averse implementations tailored to the application context, and finally, 3) we propose a way to interpret ESRL's policy at every state through posterior distributions, and use this framework to compute off-policy value function posteriors. We provide theoretical guarantees for our estimators and regret bounds consistent with Posterior Sampling for RL (PSRL). Sample efficiency of ESRL is independent of the chosen risk aversion threshold and quality of the behavior policy.   
\end{abstract}
\vskip-8pt
\section{Introduction}\label{section: Introduction}
\vskip-8pt

With increasing success in reinforcement learning (RL), there is broad interest in applying these methods to real-world settings. This has brought exciting progress in offline RL and off-policy policy evaluation (OPPE). These methods allow one to leverage observed data sets collected by expert exploration of environments where, due to costs or ethical reasons, direct exploration is not feasible. Sample-efficiency, reliability, and ease of interpretation are characteristics that offline RL methods must have in order to be used for real-world applications with high risks, where a tendency is exhibited towards sampling bias. In particular there is a need for policies that shed light into the decision-making at all states and actions, and account for the uncertainty inherent in the environment and in the data collection process. In healthcare data for example, there is a common bias that arises: drugs are mostly prescribed only to sick patients; and so naive methods can lead agents to consider them harmful. Actions need to be limited to policies which are similar to the expert behavior and sample size should be taken into account for decision-making \citep{Gottesman2019, Gottesmancorr2019}.

To address these deficits we propose an Expert-Supervised RL (ESRL) approach for offline learning based on Bayesian RL. This method yields safe and optimal policies as it learns when to adopt the expert's behavior and when to pursue alternative actions. Risk aversion might vary across applications as errors may entail a greater cost to human life or health, leading to variation in tolerance for the target policy to deviate from expert behavior. ESRL can accommodate different risk aversion levels. We provide theoretical guarantees in the form of a regret bound for ESRL, independent of the risk aversion level. Finally, we propose a way to interpret ESRL's policy at every state through posterior distributions, and use this framework to compute off-policy value function posteriors for any given policy. 

While training a policy, ESRL considers the reliability of the observed data to assess whether there is substantial benefit and certainty in deviating from the behavior policy, an important task in a context of limited data. This is embedded in the method by learning a policy that chooses between the optimal action or the behavior policy based on statistical hypothesis testing. The posteriors are used to test the hypothesis that the seemingly optimal action is indeed better than the one from the behavior policy. Therefore, ESRL is robust to the quality of the behavior policy used to generate the data. 

To understand the intuition for why hypothesis testing works for offline policy learning, we discuss an example. Consider a medical setting where we are interested in the best policy to treat a complex disease over time. We first assume there is a standardized treatment guideline that works well and that most physicians adopt it to treat their patients. The observed data will have very little exploration of the whole environment ---in this case, meaning little use of alternative treatments. However, the state-action pairs observed will be near optimal. For any fixed state, those actions not recommended by the treatment guidelines will be rare in the data set and the posterior distributions will be dominated by the uninformative wide priors. The posteriors for the value associated with the optimal actions will incorporate more information from the data as they are commonly observed. Thus, testing for the null hypothesis that an alternative action is better than the treatment guideline will likely yield a failure to reject the null, and the agent will conclude the physician's action is best. Unless the alternative is substantially better for a given state, the learned policy will not deviate from the expert's behavior when there is a clear standard of care. 

On the other hand, if there is no treatment guideline or consensus among physicians, different doctors will try different strategies and state-action pairs will be more uniformly observed in the data. At any fixed state, some relatively good actions may have narrower posterior distributions associated with their value. Testing for the null hypothesis that a fixed action is better than what the majority of physicians chose is more likely to reject the null and point towards an alternative action in this case, as variance will be smaller across the sampled actions. Deviation from the (noisy) behavior policy will occur more frequently. Therefore, whether there is a clear care guideline or not, the method will have learned a suitable policy. A central point in Bayesian RL is that the posterior provides not just the expected value for each action, but also higher moments. We leverage this to produce interpretable policies which can be understood and analyzed within the context of the application. We illustrate this with posterior distributions and credible intervals (CI). We further propose a way to produce posterior distributions for OPPE with consistent and unbiased estimates. 
\paragraph{Handling Uncertainty.} To the best of our knowledge, there is no work that has incorporated hypothesis testing directly into the policy training process. However, accounting for the uncertainty in policy estimation is a successful idea which has been widely explored in other works. Methods range from confidence interval estimation using bootstrap, to model ensembles for guiding online exploration \citep{Kaelbling1993, White2010, Kurutach2018}. For example, a simple and effective way of incorporating uncertainty is through random ensembles (REM) \cite{agarwal2019REM}. These have shown promise on Atari games, significantly outperforming Deep $Q$ networks (DQN) \citep{Mnih2015} and naive ensemble methods in the offline setting. We adopt the Bayesian framework, which has been proven successful in online RL \citep{Ghavamzadeh2015,O'Donoghue2018}, as it provides a natural way to formalize uncertainty in finite samples. Bayesian model free methods such as temporal difference (TD) learning provide provably efficient ways to explore the dynamics of the MDP \citep{Dearden1998, AsmuthJohn2012, Metelli2019}. Gaussian Process TD can be also used to provide posterior distributions with mean value and CI for every state-action pair \citep{Engel2005}. Although efficient for online exploration, TD methods require large data in high dimensional settings, which can be a challenge in complex offline applications such as healthcare. ESRL is model-based which makes it sample efficient \citep{Deisenroth2011}. Within model-based methods, the Bayesian framework allows for natural incorporation of uncertainty measures. Posterior sampling RL proposed by Strens efficiently explores the environment by using a single MDP sample per episode \citep{Strens00}. ESRL fits within this line of methods, which are theoretically guaranteed to be efficient in terms of finite time regret bounds \citep{Osband2013, Osband2016}.

\paragraph{Hypothesis Testing for Offline RL}
Naively applying model-based RL to offline, high dimensional tasks can degrade its performance, as the agent can be led to unexplored states where it fails to learn reliable policies. There are environments where simple approaches like behavior cloning (BC) on the offline data set is enough to ensure reliability. BC has actually been shown to perform quite well in offline benchmarks like RL Unplugged \cite{gulcehre2020rl}, D4RL \cite{fu2020d4rl} and Atari when the data is collected from a single noisy behavior policy \cite{fujimoto2019benchmarking}. The issue with these approaches is that there is to be gained in terms of optimality with respect to the expert, and there is no guarantee that the learned policies are safe in all states, a necessary condition when treating patients. A common strategy is to regularize the learned policy towards the behavior policy whether directly in the state space or in the action space \citep{Kumar2019, kidambi2020, wu2019,gulcehre2020rl,fujimoto2019benchmarking}. However, there are cases where the data logging policy is a noisy representation of the expert behavior, and regularization will lead to sub-optimal actions. ESRL can detect these cases through hypothesis testing \citep{Quentin2020} to check whether improvement upon the behavior policy is feasible and, if so, incorporate new actions into the policy in accordance with the user’s risk tolerance. Additionally, as opposed to the regularization hyper-parameter that one must choose for methods like Batch Constrained deep Q-learning (BCQ) \cite{gulcehre2020rl,fujimoto2019benchmarking}, the risk-aversion parameter has a direct interpretation as the significance level that the user is comfortable with for the policy to deviate from the expert behavior. It allows the method to be tailored to different scientific and business applications where one might have different tolerance towards risk in search for higher rewards.

\paragraph{Off-Policy Policy Evaluation and Interpretation.} Many of the aforementioned methods can be easily adapted for offline learning and often importance sampling is used to address the distribution shift between the behavior and target policies \citep{sutton}. However, importance sampling can yield high variance estimates in finite samples, especially in long episodes. Doubly robust estimation of the value function is proposed to address these issues. These methods will have low variance and consistent estimators if either the behavior policy or the model is correctly specified \citep{Jiang2016, WDR}. Still, in finite samples or environments with high dimensional state-action spaces, these doubly robust estimators may still not be reliable, because only a few episodes end up contributing to the actual value estimate due to the product in the importance sampling weights \citep{Gottesmancorr2019}. Additionally, having point estimates without any measure of associated uncertainty can be dangerous, as it is hard to know whether the sample size is large enough for the estimate to be reliable. To this end, we use the ESRL framework to sample MDP models from the posterior and evaluate the policy value. Our estimates are unbiased and consistent, and are equipped with uncertainty measures.
\vskip-8pt

\section{Problem Set-up}\label{section: setup}
\vskip-8pt

We are interested in learning policies that can be used in real-world applications. To develop the framework we will use the clinical example discussed in Section \ref{section: Introduction}. Consider a finite horizon MDP defined by the following tuple: $<\mathcal{S},\mathcal{A},R^M,P^M,P_0,\tau>$, where $\mathcal{S}$ is the state-space, $\mathcal{A}$ is the action space, $M$ is the model over all rewards and state transition probabilities with prior $f(\cdot),$ $R^M(s,a):\mathcal{S}\times\mathcal{A}\rightarrow[0,1]$ is the reward distribution for fixed state-action pair $(s,a)$ under model $M$, with mean $\bar R^M(s,a)$. $P^M_a(s'|s)$ is the probability distribution function for transitioning to state $s'$ from state-action pair $(s,a)$ under model $M$, $\tau\in\mathbb{N}$ is the fixed episode length, and $P_0$ is the initial state distribution. The true MDP model $M^*$ has distribution $f$. 

The behavior policy function is a noisy version of a deterministic policy. Going back to the clinical example there is generally a consensus of what the correct treatment is for a disease, but the data will be generated by different physicians who might adhere to the consensus to varying degrees. Thus, we model the standard of care as a deterministic policy function $\pi^0:\mathcal{S}\times\{1,\dots,\tau\}\mapsto\mathcal{A}$. The behavior policy is $\pi(s,t)=\pi^0(s,t)$ with probability (w.p.) $1-\epsilon$, and $\pi(s,t)=a$ sampled uniformly at random from $\mathcal{A}$ w.p. $\epsilon$. For a fixed $\epsilon\in[0,1]$, $\pi$ generates the observed data $\pmb D_T=\{(s_{i1},a_{i1},r_{i1},\dots,s_{i\tau },a_{i\tau},r_{i\tau})\}_{i=1}^T$ which consists of $T$ episodes (i.e. patient treatment histories), where $s_{i1}\sim P_0$ $\forall i=1,\dots,T$. Note that $\pi^0$ may generally yield high rewards, however it is not necessarily optimal and can be improved upon. 

We'll denote a policy function by $\mu:\mathcal{S}\times\{1,\dots,\tau\}\rightarrow\mathcal{A}$. The associated value function for $\mu$, model $M$ is $V_{\mu,t}^M(s)=\mathbb{E}_{M,\mu}\left[\sum_{j=t}^\tau\bar R^M(s_j,a_j)|s_t=s\right],$ and the action-value function is $Q^{M}_{\mu,t}(s,a)=\bar R^{M}(s,a)+  \sum_{s'\in\mathcal{S}}P^{M}_a(s'|s)V_{\mu,t+1}(s').$ At any fixed $(s,t)$, $\mu(s,t)\equiv$arg max$_aQ_{\tilde\mu,t}(s,a)$, note that we allow $\tilde\mu$ in the $Q$ function to differ from $\mu$. This distinction will be useful as $\tilde\mu$ can be $\mu$, $\pi$ (or the ESRL policy defined in Section \ref{section: ESRL}). Finally, $\pi(a|s,t)$ is the probability of $a$ given ($s,t$), under the behavior policy.

%
%
%
%
\vskip-8pt

\section{Expert-Supervised Reinforcement Learning}\label{section: ESRL}
\vskip-8pt

We are interested in finding a policy which improves upon $\pi$. Directly regularizing the target policy to the behavior might restrict the agent from finding optimal actions, especially when $\pi$ has a high random component $\epsilon$, or $\pi^0$ is not close to optimal. Thus we want to know when to use $\mu$ versus $\pi$. This motivates the use of posterior distributions to quantify how well each state has been explored in $\pmb D_T$ and how close $\pi$ is to $\pi^0$. At every state and time $(s,t)$ in the episode we can sample $K$ MDP models from $f(\cdot|\pmb D_T)$. These samples are used to compare the quality of the behavior and target policy actions. We consider both the expected values of each action $Q_{\tilde\mu,t}(s,\pi(s,t))$ versus $Q_{\tilde\mu,t}(s,\mu(s,t))$, and their second moments for any fixed $\tilde\mu$. In particular, posterior distributions of $Q_{\tilde\mu,t}(s,a),$ $a\in\mathcal{A}$ are used to test if the value for $\mu(s,t)$ is significantly better than $\pi$. This makes the learning process robust to the quality of the behavior policy. 
Next we formalize these arguments by a sampling scheme, define the ESRL policy, and state its theoretical properties. 

\paragraph{Sampling $Q$ functions.}\label{section: CI estimation}
The distribution over the MDP model $f(\cdot|\pmb D_T)$ implicitly defines a posterior distribution for any $Q$ function: $Q_{\tilde\mu,t}(s,a)\sim f_Q(\cdot|s,a,t,\pmb D_T)$. As the true MDP model $M^*$ is stochastic, we want to approximate the conditional mean $Q$ value: 
$
\mathbb{E}\left[Q^{M^*}_{\tilde\mu,t}(s,a)|s,a,t,\pmb D_T\right].
$
We do this by sampling $K$ MDP models $M_k$, compute $Q^{(k)}_{\tilde\mu,t}(s,a)$, $k=1,\dots,K$ and use $\hat Q_{\tilde\mu,t}(s,a)\equiv\frac{1}{K}\sum_{k=1}^KQ^{(k)}_{\tilde\mu,t}(s,a)$. 
\begin{lemma}\label{lemma: q function concent}
	$\hat Q_{\tilde\mu,t}(s,a)$ is consistent and unbiased for $Q^{M^*}_{\tilde\mu,t}(s,a):$
	\[
	\mathbb{E}\left[\hat Q_{\tilde\mu,t}(s,a)|s,a,t,\pmb D_T\right]=\mathbb{E}\left[Q^{M^*}_{\tilde\mu,t}(s,a)|s,a,t,\pmb D_T\right],
	\]
	\[
	\hat Q_{\tilde\mu,t}(s,a)-\mathbb{E}\left[Q^{M^*}_{\tilde\mu,t}(s,a)|s,a,t,\pmb D_T\right]=O_p\left(K^{-\frac{1}{2}}\right), \forall(t,s,a).
	\]
\end{lemma}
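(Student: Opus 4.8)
The plan is to exploit the fact that, conditionally on $\pmb D_T$, the sampled models $M_1,\dots,M_K$ are i.i.d.\ draws from the same law $f(\cdot\mid\pmb D_T)$ that governs the true model $M^*$ given the data. Since the prior on $M^*$ is $f$, Bayes' rule identifies the conditional distribution of $M^*$ given $\pmb D_T$ with $f(\cdot\mid\pmb D_T)$; because for fixed $(\tilde\mu,t,s,a)$ the map $M\mapsto Q^{M}_{\tilde\mu,t}(s,a)$ is a deterministic functional of $M$ (the finite-horizon Bellman recursion terminates at $t=\tau$ and so defines $Q^{M}_{\tilde\mu,t}(s,a)$ unambiguously), it follows that $Q^{(1)}_{\tilde\mu,t}(s,a),\dots,Q^{(K)}_{\tilde\mu,t}(s,a)$ are i.i.d.\ with exactly the conditional law $f_Q(\cdot\mid s,a,t,\pmb D_T)$ of $Q^{M^*}_{\tilde\mu,t}(s,a)$ given $\pmb D_T$. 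Throughout, $\tilde\mu$ is held fixed, i.e.\ it is not a function of the very samples used to form $\hat Q_{\tilde\mu,t}(s,a)$.

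Given this, unbiasedness is immediate from linearity of conditional expectation:
\[
\mathbb{E}\!\left[\hat Q_{\tilde\mu,t}(s,a)\mid s,a,t,\pmb D_T\right]
=\frac1K\sum_{k=1}^K\mathbb{E}\!\left[Q^{(k)}_{\tilde\mu,t}(s,a)\mid s,a,t,\pmb D_T\right]
=\mathbb{E}\!\left[Q^{M^*}_{\tilde\mu,t}(s,a)\mid s,a,t,\pmb D_T\right].
\]
For the rate, I would use that the mean rewards lie in $[0,1]$ and episodes have length $\tau$, so $0\le Q^{M}_{\tilde\mu,t}(s,a)\le\tau$ for every $M$; hence a single sample has conditional variance at most $\tau^2/4$, and by independence
\[
\mathrm{Var}\!\left(\hat Q_{\tilde\mu,t}(s,a)\mid s,a,t,\pmb D_T\right)
=\frac1K\,\mathrm{Var}\!\left(Q^{(1)}_{\tilde\mu,t}(s,a)\mid s,a,t,\pmb D_T\right)\le\frac{\tau^2}{4K}.
\]
Combining the two displays with Chebyshev's inequality gives, for any $\delta>0$,
\[
\Pr\!\left(\,\bigl|\hat Q_{\tilde\mu,t}(s,a)-\mathbb{E}[Q^{M^*}_{\tilde\mu,t}(s,a)\mid s,a,t,\pmb D_T]\bigr|>\delta K^{-1/2}\;\Big|\;\pmb D_T\right)\le\frac{\tau^2}{4\delta^2},
\]
which is the claimed $O_p(K^{-1/2})$ statement conditionally on $\pmb D_T$; since the constant $\tau^2/(4\delta^2)$ does not depend on $\pmb D_T$, the same bound holds unconditionally, and a union bound over the finite collection of triples $(t,s,a)$ (with $t\le\tau$ and $\mathcal S,\mathcal A$ finite) upgrades it to hold simultaneously for all $(t,s,a)$.

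The argument is mostly bookkeeping; the one point that genuinely needs care is the identification of the conditional law of $Q^{M^*}_{\tilde\mu,t}(s,a)$ given $\pmb D_T$ with the sampling law of each $Q^{(k)}_{\tilde\mu,t}(s,a)$ --- this is precisely the defining property of posterior sampling, but it hinges on $\tilde\mu$ being fixed rather than being the ESRL policy built from the same $K$ draws; for the latter one should either condition further on the event defining that policy, or regard $K$ as a computational budget so that the ESRL policy is $\pmb D_T$-measurable. The secondary ingredient, boundedness of $Q^{M}_{\tilde\mu,t}$, is what makes every expectation and variance above finite, and is the only place the assumptions $R^M(s,a)\in[0,1]$ and $\tau<\infty$ enter.
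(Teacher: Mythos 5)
Your proof is correct and follows essentially the same route as the paper's: unbiasedness via the posterior-sampling identity (the paper invokes its Lemma B.1 with $g:M\mapsto Q^{M}_{\tilde\mu,t}(s,a)$), and the $O_p(K^{-1/2})$ rate from boundedness $Q^{(k)}_{\tilde\mu,t}(s,a)\le\tau$ plus a concentration bound on the i.i.d.\ conditional samples. The only difference is that you use Chebyshev where the paper uses Hoeffding, which is immaterial for an $O_p$ statement; your closing caveat about $\tilde\mu$ needing to be $\pmb D_T$-measurable is a point the paper states but does not elaborate, and is well taken.
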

Lemma \ref{lemma: q function concent} establishes desirable properties for our $Q$ function estimation. Choosing $K=1$ yields an immediate result: every $Q^{(k)}_{\tilde\mu,t}(s,a)$ from model $M_k$ is unbiased. 

The stochasticity of $M^*$ and $\pi$ suggests the mean $Q$ values for $\pi$ and $\mu$ are not enough to make a decision for whether it is beneficial to deviate from $\pi$. 
Next we discuss how to directly incorporate this uncertainty assessment into the policy training through Bayesian hypothesis testing.

\paragraph{ESRL Policy Learning Through Hypothesis Testing.}

{
For a fixed $\alpha$-level, denote the ESRL policy by $\mu^{\alpha}$, we next describe the steps to learn this policy. By iterating backwards as in dynamic programming, assume we know $\mu^{\alpha}(s,j)$ $\forall s\in\mathcal{S},j\in\{t+1,\dots,\tau\}$, and we have $V_{\mu^{\alpha},\tau+1}^M(s)=0,\forall s\in\mathcal{S}$. Intuitively, at any $(s,t)$ we want to assess whether there is enough information in $D_T$ to support choosing the seemingly best action $\mu$ over $\pi$. Denote $\mu(s,t)=$arg max$_aQ_{\mu^\alpha,t}(s,a)$ as the best action if we follow the ESRL policy $\mu^{\alpha}$ onward, we formalize this with the following hypothesis: 
\begin{align}\label{def: generic null}
H_0:Q^M_{\mu^{\alpha},t}(s,\mu(s,t))\le Q^M_{\mu^{\alpha},t}(s,\pi(s,t)).
\end{align}
Note that in \eqref{def: generic null}, both $Q$ functions assume the agent proceeds with ESRL policy $\mu^\alpha$ onward. If we can reject $H_0$, then it is safe to follow $\mu$, if we fail to reject the null, it does not necessarily mean the behavior policy is better, but there is not enough information in the data to support following $\mu$. To construct a safe ESRL policy we simply evaluate $H_0$ by computing the null probability $\mathbb{P}\left(H_0|t,s,\pmb D_T\right)$, if this is below a pre-specified risk-aversion level $\alpha$ then we can safely choose $\mu$. In other words if the learned policy does not yield a significantly better value estimate, then we fail to reject the null and proceed to use the behavior policy's action. The ESRL policy at $(s,t)$ is then  
\[
\mu^{\alpha}(s,t) = \left\{
\begin{array}{ll}
\mu(s,t) & \text{ if }\quad \mathbb{P}\left(H_0|t,s,\pmb D_T\right)<\alpha, \\
\pi(s,t) & \text{else.}
\end{array}
\right.
\]
To compute $\mu^\alpha(s,t)$, we start by sampling $K$ MDP models from the posterior distribution, computing $\{Q^{(k)}_{\mu^\alpha,t}(s,a)\}_{k=1}^K$ and splitting the samples into two disjoint sets $\mathcal{I}_1,\mathcal{I}_2$. We use $\mathcal{I}_1$ to draw the policy $\hat\mu(s,t)$ based on majority voting. Then we use $\mathcal{I}_2$ to assess the null hypothesis in \eqref{def: generic null}, with estimator 
$
\hat{\mathbb{P}}\left(H_0|t,s,\pmb D_T\right)
=
\frac{1}{K}\sum_{k=1}^KI\left(Q^{(k)}_{\mu^{\alpha},t}(s,\hat\mu(s,t))\le  Q^{(k)}_{\mu^{\alpha},t}(s,\pi(s,t))\right).$ We next discuss convergence of the null probability estimator, and how to choose $\hat\mu(s,t)$ $\forall (s,t)\in\mathcal{S}\times\{1,\dots,\tau\}$.
}

\begin{lemma}\label{lemma: Q H_0 concent}
	Let $\mathbb{P}^*\left(H_0|t,s,\pmb D_T\right)$ be the null probability under true MDP $M^*$ with policy $\mu^*$, 
	\[
	\hat{\mathbb{P}}\left(H_0|t,s,\pmb D_T\right)-\mathbb{P}^*\left(H_0|t,s,\pmb D_T\right)=O_p\left(K^{-\frac{1}{2}}\right).
	\]
\end{lemma}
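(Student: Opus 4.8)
The plan is to split the error into a Monte-Carlo part, coming from replacing a posterior probability by an empirical average of indicators, and a ``plug-in'' part, coming from using the majority-vote action $\hat\mu(s,t)$ in place of the posterior-optimal action. Write
$p(a):=\mathbb{P}\big(Q^{M}_{\mu^{\alpha},t}(s,a)\le Q^{M}_{\mu^{\alpha},t}(s,\pi(s,t))\,\big|\,t,s,\pmb D_T\big)$
for the posterior probability (over $M\sim f(\cdot|\pmb D_T)$) that action $a$ is no better than the behavior action, so that $\mathbb{P}^*(H_0|t,s,\pmb D_T)=p(\mu(s,t))$ with $\mu(s,t)=\arg\max_a Q_{\mu^{\alpha},t}(s,a)$ taken at the population (posterior-mean) level. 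By the triangle inequality,
\[
\big|\hat{\mathbb{P}}(H_0|t,s,\pmb D_T)-\mathbb{P}^*(H_0|t,s,\pmb D_T)\big|
\le
\underbrace{\big|\hat{\mathbb{P}}(H_0|t,s,\pmb D_T)-p(\hat\mu(s,t))\big|}_{(\mathrm{I})}
+\underbrace{\big|p(\hat\mu(s,t))-p(\mu(s,t))\big|}_{(\mathrm{II})}.
\]

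For $(\mathrm{I})$, condition on $\pmb D_T$ and on the subsample $\mathcal{I}_1$; then $\hat\mu(s,t)$ is fixed, and because the $K$ MDPs are drawn i.i.d.\ from $f(\cdot|\pmb D_T)$ and each $Q^{(k)}_{\mu^{\alpha},t}(s,\cdot)$ is a deterministic functional of $M_k$ together with the already-determined continuation policy $\mu^{\alpha}(\cdot,j)$, $j>t$, the summands $I\big(Q^{(k)}_{\mu^{\alpha},t}(s,\hat\mu(s,t))\le Q^{(k)}_{\mu^{\alpha},t}(s,\pi(s,t))\big)$, $k\in\mathcal{I}_2$, are i.i.d.\ Bernoulli$(p(\hat\mu(s,t)))$. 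A CLT (or Chebyshev, since the variance is at most $1/4$) gives $(\mathrm{I})=O_p(|\mathcal{I}_2|^{-1/2})=O_p(K^{-1/2})$, uniformly over the finitely many possible values of $\hat\mu(s,t)$; averaging back over $\mathcal{I}_1$ preserves the rate.

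For $(\mathrm{II})$, I would invoke Lemma~\ref{lemma: q function concent}: the majority vote is $\hat\mu(s,t)=\arg\max_a \hat Q_{\mu^{\alpha},t}(s,a)$ computed on $\mathcal{I}_1$, and each $\hat Q_{\mu^{\alpha},t}(s,a)$ is within $O_p(K^{-1/2})$ of the posterior mean $\bar Q(s,a):=\mathbb{E}[Q^{M^*}_{\mu^{\alpha},t}(s,a)|s,a,t,\pmb D_T]$. Under a margin condition $\Delta:=\bar Q(s,\mu(s,t))-\max_{a\neq\mu(s,t)}\bar Q(s,a)>0$, and strengthening Lemma~\ref{lemma: q function concent} to a Hoeffding-type tail bound (valid since the $Q^{(k)}$ lie in a bounded range), a union bound over $\mathcal{A}$ yields $\mathbb{P}(\hat\mu(s,t)\neq\mu(s,t))\le|\mathcal{A}|\exp(-cK\Delta^2)=o(K^{-1/2})$. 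On $\{\hat\mu(s,t)=\mu(s,t)\}$ we have $(\mathrm{II})=0$, and on its complement $(\mathrm{II})\le1$, so $(\mathrm{II})=O_p(K^{-1/2})$; combining with $(\mathrm{I})$ proves the claim. Finally, since the ESRL policy is constructed by backward induction over the fixed, finite horizon $\tau$, the errors incurred at times $j>t$ when determining $\mu^{\alpha}(\cdot,j)$ enter through a sum of at most $\tau$ terms each of order $O_p(K^{-1/2})$ and so do not degrade the rate.

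\textbf{Main obstacle.} The delicate step is $(\mathrm{II})$, i.e.\ the dependence of the test statistic on the data-driven action $\hat\mu(s,t)$. If the posterior means of competing actions can be arbitrarily close, then $\Delta$ may be $0$, majority voting is unstable, and the clean exponential control above breaks down; one then needs a margin-free argument --- e.g.\ exploiting continuity of the posterior CDF of $Q^{M}_{\mu^{\alpha},t}(s,\mu(s,t))-Q^{M}_{\mu^{\alpha},t}(s,\pi(s,t))$ to show that near-ties contribute a vanishing amount to $(\mathrm{II})$ --- or one simply imposes such a margin as a regularity condition. Carefully propagating this single-step bound through the backward recursion, while tracking how the perturbed continuation policy perturbs the sampled $Q^{(k)}$'s, is the other place where bookkeeping is needed, but it is routine once the one-step estimate is established.
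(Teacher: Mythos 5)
Your argument is essentially sound as a self-contained proof, but it takes a genuinely different route from the paper, and the difference is worth spelling out. The paper never confronts your term $(\mathrm{II})$: its proof sets $Z^{(k)}=I\bigl(Q^{(k)}_{\mu_k^{\alpha},t}(s,\mu_k(s,t))\le Q^{(k)}_{\mu_k^{\alpha},t}(s,\pi(s,t))\bigr)$ with the \emph{per-sample} argmax $\mu_k(s,t)$, which is a deterministic functional of $M_k$ given $\pmb D_T$; it applies Hoeffding to $\frac{1}{K}\sum_k Z^{(k)}$ exactly as in your term $(\mathrm{I})$, and then identifies the common mean with $\mathbb{P}^*(H_0|t,s,\pmb D_T)$ via the posterior-sampling identity (Lemma 1 of Osband et al.: $\mathbb{E}[g(M_k)|\pmb D_T]=\mathbb{E}[g(M^*)|\pmb D_T]$ for $\sigma(\pmb D_T)$-measurable $g$). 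Because the target is itself defined through the model-dependent action $\mu^*(s,t)$ (the argmax under the realized $M^*$), no plug-in error and no margin condition ever arise; the identification is exact and margin-free. Your decomposition instead fixes the target as $p(\mu(s,t))$ for the posterior-mean-optimal action and pays for this with the margin assumption $\Delta>0$ (and the continuity patch when $\Delta=0$), an assumption the paper does not need. Two caveats cut in your favor, though. First, your route analyzes the estimator the algorithm actually computes, with the majority-vote $\hat\mu(s,t)$ held fixed across $\mathcal{I}_2$; the paper's $Z^{(k)}$ quietly substitutes $\mu_k(s,t)$ for $\hat\mu(s,t)$, and the remark about disjoint $\mathcal{I}_1,\mathcal{I}_2$ is its only gesture toward that discrepancy. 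Second, be aware that the two arguments do not target literally the same quantity: $p(\mu(s,t))$ for a fixed action and $\mathbb{E}\bigl[I\bigl(\max_a Q^{M^*}_{\mu^{\alpha*},t}(s,a)\le Q^{M^*}_{\mu^{\alpha*},t}(s,\pi(s,t))\bigr)\big|\pmb D_T\bigr]$ differ in general, so under the paper's reading of $\mathbb{P}^*(H_0)$ your term $(\mathrm{II})$ should be measured against the latter; adopting the posterior-sampling identity there would let you drop the margin condition entirely and would collapse your proof onto the paper's.
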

Lemma \ref{lemma: Q H_0 concent} guarantees that we can construct a consistent policy $\mu^\alpha$ by sampling from the MDP posterior. There are two factors that come into play in \eqref{def: generic null}: the difference in mean $Q$ values, and the second moments. If $Q^{M^*}_{\mu^\alpha,t}(s,\mu(s,t))$ is much higher than $Q^{M^*}_{\mu^\alpha,t}(s,\pi(s,t))$, but there are very few samples in $\pmb D_T$ for $(s,\mu(s,t))$, the wide posterior will translate into a high $\hat{\mathbb{P}}\left(H_0|t,s,\pmb D_T\right)$ leading ESRL to adopt $\pi(s,t)$. To choose $\mu(s,t)$ there needs to be both a substantial benefit for this new action and a high certainty of such gain. How averse the user is to deviating from $\pi$ is controlled by parameter $\alpha$. A small risk averse $\alpha$ will allow $\mu^\alpha$ to deviate from $\pi$ only with high certainty. When $\alpha=1$, Algorithm \ref{algorithm: ESRL} boils down to an offline version of PSRL after $T$ episodes, which uses majority voting for a robust policy. Algorithm \ref{algorithm: ESRL} collects these ideas in order to learn an ESRL policy $\mu^\alpha$. Disjoint sets $\mathcal{I}_1,\mathcal{I}_2$, ensure independence and keep theoretical guarantees under the Assumption \ref{assumptions:  alpha}.

\begin{algorithm}
	\SetAlgoLined
	Sample $M_k\sim f(\cdot|\pmb D_T)$ $k=1,\dots,K$, set $\mathcal{I}_1=\{1,\dots,\ceil{\frac{K}{2}}\}$, $\mathcal{I}_2=\{\ceil{\frac{K}{2}}+1,\dots,K\}$\;
	Set $\hat V^{(k)}_{\tau+1}(s)\leftarrow0$ $\forall s\in\mathcal{S}$, $k=1,\dots,K$\; 
	Compute behavior distribution $\pi(a|s,t)$ from $\pmb D_T$, set $\pi(s,t)=\text{arg}\max_a\pi(a|s,t)$\;	
	\For{$t=\tau,\dots,1$}{
		\For{$s\in\mathcal{S}$}{
			\For{$k=1,\dots,K$}{	
				$\mu_k(s,t)\leftarrow\text{arg}\max_aQ^{(k)}_{\mu^{\alpha},t}(s,a)$\;
			}
			$\hat\mu(s,t)\leftarrow\text{maj. vote}\{\mu_k(s,t),k\in\mathcal{I}_1\}$\;
			Compute $\hat{\mathbb{P}}(H_0|s,t,\pmb D_T)=\frac{1}{|\mathcal{I}_2|}\sum_{k\in\mathcal{I}_2}I\left(Q^{(k)}_{\mu^{\alpha},t}(s,\hat\mu(s,t))<Q^{(k)}_{\mu^{\alpha},t}(s,\pi(s,t))\right)$\;
			\For{$k=1,\dots,K$}{
				$\mu^\alpha_k(s,t)\leftarrow I\left(\hat{\mathbb{P}}(H_0|s,t,\pmb D_T)<\alpha\right)\mu_k(s,t)+I\left(\hat{\mathbb{P}}(H_0|s,t,\pmb D_T)\ge\alpha\right)\pi(s,t)$\;
				$\hat V^{(k)}_t(s)\leftarrow Q^{(k)}_{\mu^{\alpha},t}(s,\mu^\alpha_k(s,t))$\;
			}
			$\hat\mu^\alpha(s,t)\leftarrow\text{maj. vote}\{\mu^\alpha_k(s,t),k\in\mathcal{I}_1\}$\;
			$\mathcal{M}^\alpha(s,t)\leftarrow\left\{k|k\in\mathcal{I}_1,\mu^\alpha_k(s,t) = \hat\mu^\alpha(s,t)\right\}$\;					
		}
	}
	Define majority voting set: MV$^\alpha=\cap_{(s,t)}\mathcal{M}^\alpha(s,t)$\;
	\uIf{$\exists k\in \text{MV}^\alpha$}{
		choose $k\in \text{MV}^\alpha$ at random, set $k^{\text{MV}}\leftarrow k$
	}
	\Else{
		Set $k^{\text{MV}}$ to most common $k\in \mathcal{M}^\alpha(s,t),\forall (s,t)$
	}
	Set $\mu^\alpha=\mu_{k^{MV}}$ 
	\caption{Expert-Supervised RL}
	\label{algorithm: ESRL}
\end{algorithm}

\begin{assumption}\label{assumptions:  alpha}
	Let $\mathbb{P}^*(H_0|s,t,\pmb D_T)$ be defined as in \eqref{def: generic null} for the true $M^*$. The chosen risk-averse parameter $\alpha\in[0,1]$ satisfies $\mathbb{P}^*(H_0|s,t,\pmb D_T)\neq\alpha$ $\forall (s,t)\in\mathcal{S}\times\{1,\dots,\tau\}$.
\end{assumption}

As $\alpha$ is set by the user, Assumption \ref{assumptions:  alpha}  is easily satisfied as long as $\alpha$ is chosen carefully. Let $V_{\mu^{\alpha*},1}^{M^*}(s)$ be the value under the true MDP $M^*$ and let $\mu^{\alpha*}$ be an ESRL policy which uses the null hypotheses in \eqref{def: generic null} defined under $M^*$. Then, for episode $i$ we can define the regret for $\mu^{\alpha}$ from Algorithm \ref{algorithm: ESRL} as $\Delta_i=\sum_{s_i\in\mathcal{S}}P_0(s_i)\left(V_{\mu^{\alpha*},1}^{M^*}(s_i)-V_{\mu^{\alpha},1}^{M^*}(s_i)\right)$, and the expected regret after $T$ episodes as
$\mathbb{E}[Regret(T)]=\mathbb{E}\left[\sum_{i=1}^T\Delta_i\right]$.

\begin{theorem}[Regret Bound for ESRL]\label{theorem: alg 2}
	For any $\alpha\in[0,1]$ which satisfies Assumption \ref{assumptions:  alpha}, Algorithm \ref{algorithm: ESRL} using $\pmb D_T$ and choosing $K=\mathcal{O}\left( T\right)$ will yield
	\[
	\mathbb{E}\left[Regret\left(T\right)\right]=\mathcal{O}\left(\tau S\sqrt{AT\log(SAT)}\right).
	\]
	
\end{theorem}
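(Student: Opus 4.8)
The plan is to adapt the posterior-sampling regret analysis of PSRL to the offline ESRL setting, showing that the hypothesis-testing layer contributes only lower-order terms so that the PSRL rate is inherited. As a consistency check, note that when $\alpha=1$ Algorithm~\ref{algorithm: ESRL} reduces to offline PSRL with a majority-vote output, so the target bound is exactly the PSRL regret; the substantive case is $\alpha<1$, where the comparator is the oracle ESRL policy $\mu^{\alpha*}$ (defined with the true $M^*$) rather than the MDP-optimal policy.

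First I would fix an episode $i$ and, writing $M_{k^{\mathrm{MV}}}$ for the posterior sample selected by the algorithm, split
\[
V_{\mu^{\alpha*},1}^{M^*}(s)-V_{\mu^{\alpha},1}^{M^*}(s)=\Big[V_{\mu^{\alpha*},1}^{M^*}(s)-V_{\mu^{\alpha},1}^{M_{k^{\mathrm{MV}}}}(s)\Big]+\Big[V_{\mu^{\alpha},1}^{M_{k^{\mathrm{MV}}}}(s)-V_{\mu^{\alpha},1}^{M^*}(s)\Big].
\]
For the first bracket I would invoke the posterior-sampling lemma: conditionally on $\pmb D_T$ the sample $M_{k^{\mathrm{MV}}}$ and $M^*$ are identically distributed, so in expectation the oracle value is replaced by the value of the $M_{k^{\mathrm{MV}}}$-defined ESRL policy under $M_{k^{\mathrm{MV}}}$, and the remainder compares the algorithm's data-driven decisions at each $(s,t)$ --- the majority-vote greedy action $\hat\mu(s,t)$ and the indicator $I\!\left(\hat{\mathbb{P}}(H_0\mid s,t,\pmb D_T)<\alpha\right)$ --- with their $M^*$ counterparts. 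Here Lemma~\ref{lemma: Q H_0 concent} and Assumption~\ref{assumptions:  alpha} are decisive: since $\mathbb{P}^*(H_0\mid s,t,\pmb D_T)\neq\alpha$ there is a gap $\gamma=\min_{s,t}\big|\mathbb{P}^*(H_0\mid s,t,\pmb D_T)-\alpha\big|>0$, so a wrong switching decision needs $|\hat{\mathbb{P}}-\mathbb{P}^*|>\gamma$, an event of probability $O(e^{-cK\gamma^2})$ that is negligible for $K=\mathcal O(T)$; the disjoint split $\mathcal I_1,\mathcal I_2$ supplies the independence needed to apply this concentration, and a union bound over the $S\tau$ state--times makes the first bracket lower order.

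For the second bracket I would apply the simulation lemma to express $V_{\mu^{\alpha},1}^{M_{k^{\mathrm{MV}}}}(s)-V_{\mu^{\alpha},1}^{M^*}(s)$ as $\mathbb{E}_{M^*,\mu^{\alpha}}\!\Big[\sum_{t=1}^{\tau}\big(\bar R^{M_{k^{\mathrm{MV}}}}-\bar R^{M^*}\big)(s_t,a_t)+\big(P^{M_{k^{\mathrm{MV}}}}_{a_t}-P^{M^*}_{a_t}\big)(\cdot\mid s_t)^{\!\top}V_{\mu^{\alpha},t+1}^{M_{k^{\mathrm{MV}}}}(\cdot)\Big]$, a sum of one-step Bellman errors along trajectories of $\mu^{\alpha}$. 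For a suitable conjugate prior (e.g.\ Dirichlet over transitions, rewards in $[0,1]$), on a $(1-\delta)$-event both $M^*$ and every posterior sample lie in confidence sets of radius $\tilde{\mathcal O}\!\big(\sqrt{\log(SAT/\delta)/n_t(s,a)}\big)$ around the empirical model, where $n_t(s,a)$ counts visits to $(s,a)$ at time $t$ in $\pmb D_T$; bounding $\|V_{\mu^{\alpha},t+1}^{M_{k^{\mathrm{MV}}}}\|_\infty\le\tau$, summing the resulting widths over the state--action--time cells weighted by the occupancy of $\mu^{\alpha}$, and using that the $\pmb D_T$ visit counts total $\tau T$, the usual pigeonhole/Cauchy--Schwarz step yields a per-call suboptimality $\Delta_i=\tilde{\mathcal O}\!\big(\tau S\sqrt{A\log(SAT)/T}\big)$; summing over the $T$ episodes gives the stated bound, and the Monte-Carlo errors from Lemmas~\ref{lemma: q function concent}--\ref{lemma: Q H_0 concent}, of order $K^{-1/2}=\mathcal O(T^{-1/2})$, are absorbed by the choice $K=\mathcal O(T)$.

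The main obstacle is that $\mu^{\alpha}$ is not a plain greedy policy but a data-dependent mixture of a posterior-sampled greedy action and the behavior action, so two points need care. First, the telescoping/counting step needs the trajectory distributions of $\mu^{\alpha}$ and of $\mu^{\alpha*}$ to remain in state--times that $\pmb D_T$ covers adequately; this is exactly why the comparator is the coverage-respecting $\mu^{\alpha*}$ and why the bound carries no dependence on $\epsilon$ or on the quality of $\pi^0$, but one must show that a deviation from $\pi$ occurs only where the null is rejected --- hence where that action is well visited --- and that such deviations do not route trajectories into low-$n_t(s,a)$ regions that the confidence sets cannot absorb. Second, one must check that the value functions entering the Bellman errors are measurable with respect to the $\sigma$-field generated by $\mathcal I_2$, so that the independence from $\mathcal I_1$ (used to extract $\hat\mu$) is genuinely available; verifying this backward through the dynamic-programming recursion, together with Assumption~\ref{assumptions:  alpha}, is the technical heart of the argument.
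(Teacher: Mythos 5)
Your proposal is correct and follows essentially the same route as the paper's own proof: the same add-and-subtract decomposition into a model-error term (handled via the posterior-sampling lemma, per-state-action confidence sets, and the Osband-style pigeonhole count giving $\mathcal{O}\left(\tau S\sqrt{AT\log(SAT)}\right)$) and a switching-error term (handled via the concentration of $\hat{\mathbb{P}}(H_0)$ together with the gap guaranteed by the assumption $\mathbb{P}^*(H_0)\neq\alpha$, rendered lower order by $K=\mathcal{O}(T)$). The measurability and coverage subtleties you flag at the end are genuine points that the paper's proof treats only lightly, but they do not alter the structure of the argument.
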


Theorem \ref{theorem: alg 2} shows ESRL is sample efficient, flexible to risk aversion level $\alpha$, and robust to the quality of behavior policy $\pi$. As the regret bound is true for any level of risk aversion $\alpha$, Algorithm 1 universally converges to the oracle. This makes ESRL flexible for a wide range of applications. It also shows that ESRL is suitable to a large class of models, as the regret bound does not impose a specific form on $f$. Regarding access to $f(\cdot|\textbf{D}_T)$ for sampling MDPs in real-world problems, as data increases, dependency of results on the prior decreases, so we can use any \textit{working model} to approximate the MDP. Several models are computationally simple to sample from, and can be used for learning. For example, we use the Dirichlet/multinomial, and normal-gamma/normal conjugates for $P^M$ and $R^M$ respectively, which work well for all simulation and real data settings explored in Section \ref{section: experiments}. In fact, if a Dirichlet prior over the transitions is assumed, the regret bound in Theorem \ref{theorem: alg 2} can be improved. Chosen priors should be flexible enough to capture the dynamics and easy to sample from efficiently. Next we consider how to discern whether ESRL, or any other fixed policy, is an improvement on the behavior policy.

\vskip-8pt
\section{Off-Policy Policy Evaluation and Uncertainty Estimation}
\vskip-8pt

We now illustrate how the ESRL framework can be used to construct efficient point estimates of the value function, and their posterior distributions. Hypothesis testing can also be used to assess whether the difference in value of two policies is statistically significant (i.e. $\mu^\alpha$ vs. $\pi$).

To compute the estimated value of a given policy $\tilde\mu$, we sample $K$ models from the posterior and navigate $M_k$ using $\tilde\mu$. This yields samples $V_{\tilde\mu,1}^{(k)}\sim f_V(\cdot|\pmb D_T)$. We estimate $\mathbb{E}\left[V^{M^*}_{\tilde\mu,1}(s)|\pmb D_T\right]$ with $\hat  V_{\tilde\mu}=\frac{1}{K}\sum_{k=1}^KV^{(k)}_{\tilde\mu,1}$. Note that we average over the initial states as well, as we are interested to know the marginal value of the policy. A conditional value of the policy function $V_{\tilde\mu,1}^{M^*}(s_0)$ can also be computed simply by starting all samples at a fixed state $s_0$.
 
\begin{theorem}\label{theorem: V function}
	Let $\tilde\mu:\mathcal{S}\times\{1,\dots,\tau\}\mapsto\mathcal{A}$ be a pre-specified policy,
	\[
	\mathbb{E}\left[\hat V_{\tilde\mu}\bigg|\pmb D_T\right]=\mathbb{E}\left[V^{M^*}_{\tilde\mu,1}(s)\bigg|\pmb D_T\right],
	\hat V_{\tilde\mu}-\mathbb{E}\left[V^{M^*}_{\tilde\mu,1}(s)\bigg|\pmb D_T\right]=O_p\left(K^{-\frac{1}{2}}\right).
	\]
\end{theorem}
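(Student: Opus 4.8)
The plan is to mirror the proof of Lemma~\ref{lemma: q function concent}, since $\hat V_{\tilde\mu}$ is again a Monte Carlo average of i.i.d.\ functionals of posterior draws. Throughout I condition on the observed data $\pmb D_T$, so all expectations, probabilities and stochastic orders are taken with respect to the posterior randomness in the sampled models $M_1,\dots,M_K \overset{\text{i.i.d.}}{\sim} f(\cdot\mid\pmb D_T)$. The key structural fact I will exploit is that $\tilde\mu$ is \emph{pre-specified}: it depends neither on $\pmb D_T$ nor on the sampled models, so for each $k$ the quantity $V^{(k)}_{\tilde\mu,1}$ is the deterministic value functional $V^{M_k}_{\tilde\mu,1}$ obtained by applying the Bellman recursion to the fully known model $M_k$ (and if one instead forms $V^{(k)}_{\tilde\mu,1}$ by a single rollout, it is still an unbiased, $[0,\tau]$-bounded estimate of $V^{M_k}_{\tilde\mu,1}$, so the argument below is unaffected). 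Finally, by definition of the Bayesian model, conditional on $\pmb D_T$ the true MDP $M^*$ is itself distributed as $f(\cdot\mid\pmb D_T)$, hence $M^*$ and each $M_k$ are identically distributed given the data.

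For unbiasedness, fix $k$. Because $M_k$ and $M^*$ share the same conditional law given $\pmb D_T$ and $M\mapsto V^{M}_{\tilde\mu,1}(s)$ is a fixed measurable map, $\mathbb{E}\big[V^{(k)}_{\tilde\mu,1}(s)\mid\pmb D_T\big]=\mathbb{E}\big[V^{M^*}_{\tilde\mu,1}(s)\mid\pmb D_T\big]$. Averaging over $k=1,\dots,K$ and using linearity of expectation gives $\mathbb{E}\big[\hat V_{\tilde\mu}\mid\pmb D_T\big]=\mathbb{E}\big[V^{M^*}_{\tilde\mu,1}(s)\mid\pmb D_T\big]$, the first claim. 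The same step goes through verbatim for the marginal version in which one additionally averages over $P_0$, since that is still a fixed linear functional of $M$.

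For the rate, since $R^M(s,a)\in[0,1]$ and the horizon is $\tau$, every realization obeys $V^{(k)}_{\tilde\mu,1}\in[0,\tau]$; thus the $V^{(k)}_{\tilde\mu,1}$ are i.i.d.\ given $\pmb D_T$ with conditional variance at most $\tau^2/4$. Hence $\operatorname{Var}\big(\hat V_{\tilde\mu}\mid\pmb D_T\big)\le \tau^2/(4K)$, and Chebyshev's inequality (or the CLT for the sharper asymptotic form) yields $\hat V_{\tilde\mu}-\mathbb{E}\big[V^{(k)}_{\tilde\mu,1}\mid\pmb D_T\big]=O_p(K^{-1/2})$. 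Combining this with the unbiasedness identity gives $\hat V_{\tilde\mu}-\mathbb{E}\big[V^{M^*}_{\tilde\mu,1}(s)\mid\pmb D_T\big]=O_p(K^{-1/2})$, as claimed.

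The argument is essentially routine; the one point that genuinely needs care is the one flagged above, that $\tilde\mu$ must be fixed in advance so that $V^{M_k}_{\tilde\mu,1}$ depends on the extra randomness only through $M_k$ and the i.i.d.\ structure is real. For a data-adaptive policy such as $\mu^{\alpha}$ the draws used to \emph{form} the policy and those used to \emph{evaluate} it would be entangled, which is exactly why Algorithm~\ref{algorithm: ESRL} splits the samples into $\mathcal{I}_1,\mathcal{I}_2$; no such splitting is needed here. A secondary bookkeeping point is to fix whether $V^{(k)}_{\tilde\mu,1}$ denotes the state-conditional value $V^{M_k}_{\tilde\mu,1}(s)$ or its $P_0$-average, but since both are fixed linear functionals of the sampled model the proof is identical.
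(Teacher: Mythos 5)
Your proof is correct and follows essentially the same route as the paper's: unbiasedness via the posterior-sampling identity (the paper's Lemma \ref{lemm_PS} applied to the fixed measurable map $M\mapsto V^M_{\tilde\mu,1}$) plus linearity, and the $O_p(K^{-1/2})$ rate from boundedness $V^{(k)}_{\tilde\mu,1}\in[0,\tau]$ of i.i.d.\ posterior draws --- the paper uses Hoeffding where you use Chebyshev, which is immaterial here. Your remark on why $\tilde\mu$ must be pre-specified (in contrast to the sample-split handling of $\mu^\alpha$) is a correct and worthwhile observation, but not a departure from the paper's argument.
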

 Theorem \ref{theorem: V function} ensures that we are indeed estimating the quantity of interest. It establishes that $\hat V_{\tilde\mu}$ is consistent and unbiased for $\sum_{s\in\mathcal{S}}P_0(s)V^{M^*}_{\tilde\mu,1}(s)$. As MDP $M^*$ is stochastic, point estimates without measures of uncertainty are not sufficient to evaluate the quality of a policy. For example in an application such as healthcare, there might be policies for which the second best action (treatment) is not significantly different in terms of value, but has less associated secondary risks. Including a secondary risk directly into the method might force us to make strong modeling assumptions. Therefore, testing whether such policies yield a statistically significant difference in value is important. With this information, one can devise a policy that always chooses the safest action (e.g. in clinical terms) and if this yields an equivalent value to the optimal policy, then it is preferable.

\paragraph{Policy-level hypothesis testing.} Define the value function null hypothesis for two fixed policies $\tilde\mu_1,\tilde\mu_2$ as the event in which policy $\tilde\mu_1$ has a higher expected value than $\tilde\mu_2$ conditional on $\pmb D_T$: $H_0:\mathbb{E}_{s\sim P_0,M^*}\left[V_{\tilde\mu_1,1}(s)|\pmb D_T\right]>\mathbb{E}_{s\sim P_0,M^*}\left[V_{\tilde\mu_2,1}(s)|\pmb D_T\right]$. The probability of the null under the true model $M^*$ is 
\begin{align*}
\mathbb{P}_{\mu}\left(H_0|\pmb D_T\right)
=
\sum_{s\in\mathcal{S}}P_0(s)\mathbb{P}\left(V^{M^*}_{\tilde\mu_1,1}(s)>V_{\tilde\mu_2,1}^{M^*}(s)\bigg|s,\pmb D_T\right).
\end{align*}
We use samples $V_{\tilde\mu_\ell}^{(k)}$, $\ell=1,2$ to estimate the probability of the null with
$
\hat{\mathbb{P}}_\mu\left(H_0|\pmb D_T\right)
=
\frac{1}{K}\sum_{k=1}^KI\left(V^{(k)}_{\tilde\mu_1,1}(s)>V^{(k)}_{\tilde\mu_2,1}(s)\right). 
$ Consistency of this estimator is shown in the Appendix \ref{section: other proofs}. 
\vskip-8pt
\section{Experiments and Application}\label{section: experiments}
\vskip-8pt

We perform several analyses to assess ESRL policy learning, sensitivity to the risk aversion parameter $\alpha$, value function estimation, and finally illustrate how we can interpret the posteriors within the context of the application. The code for implementing ESRL with detailed comments is publicly available\footnote{https://github.com/asonabend/ESRL}. We use the Riverswim environment \citep{Strehl2008}, and a Sepsis data set built from MIMIC-III data \citep{mimic}. We compare ESRL to several methods: a) a \textit{naive} baseline made from an ensemble of $K$ DQN models (DQNE), where we simply use the mean for selecting actions, this benchmark is meant to shed light into the empirical benefit of the hypothesis testing in ESRL. b) We argue ESRL can deviate from the behavior policy when allowed by the hypothesis testing, for further investigating the benefit of hypothesis testing, we implement behavior cloning (BC). c) We explore Batch Constrained deep Q-learning (BCQ) which uses regularization towards the behavior policy for offline RL \cite{gulcehre2020rl,fujimoto2019benchmarking,fujimoto2019off}. d) Finally, we also implement a strong benchmark which leverages ensembles and uncertainty estimation in the context of offline RL using random ensembles (REM) \cite{agarwal2019REM}. For Riverswim we use 2-128 unit layers, for Sepsis we use 128, 256 unit layers respectively \citep{Raghu}. For ESRL, we use conjugate Dirichlet/multinomial, and normal-gamma/normal for the prior and likelihood of the transition and reward functions respectively. 
\vskip-8pt
\subsection{Riverswim}
\vskip-8pt
The Riverswim environment \citep{Strehl2008} requires deep exploration for achieving high rewards. There are 6 states and two actions: swim right or left. Only swimming left is always successful. There are only two ways to obtain rewards: swimming left while in the far left state will yield a small reward (5/1000) w.p. 1, swimming right in the far right state will yield a reward of 1 w.p. 0.6. The episode lasts 20 time points. We train policy $\pi^0$ using PSRL \citep{Osband2013} for 10,000 episodes, we then generate data set $\pmb D_T$ with $\pi$, varying both size $T$ and noise $\epsilon$. The offline trained policies are then tested on the environment for 10,000 episodes. This process is repeated 50 times. 
\vskip-8pt
\begin{figure}[h!]
		\centering
	\begin{subfigure}{.5\textwidth}
		\centering
		\includegraphics[height=.14\textheight]{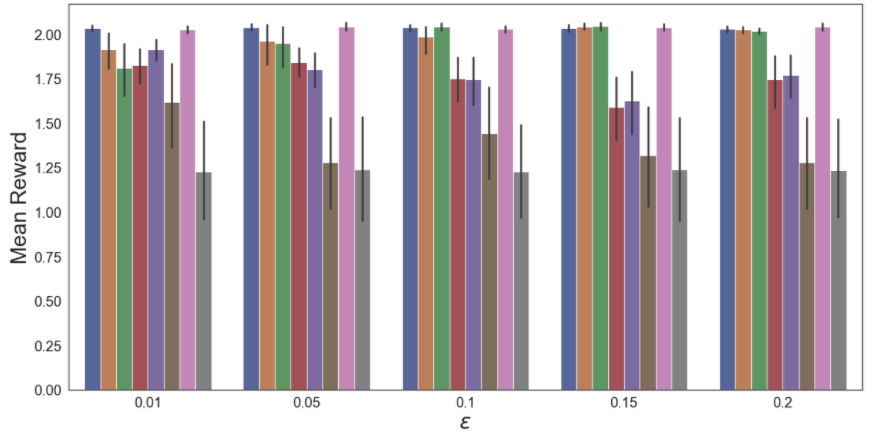}
		\caption{Mean reward for $T$=200 episodes, while varying $\epsilon$-greedy behavior policy $\pi$.}
	\end{subfigure}~
	\begin{subfigure}{.5\textwidth}
		\centering
		\includegraphics[height=.14\textheight]{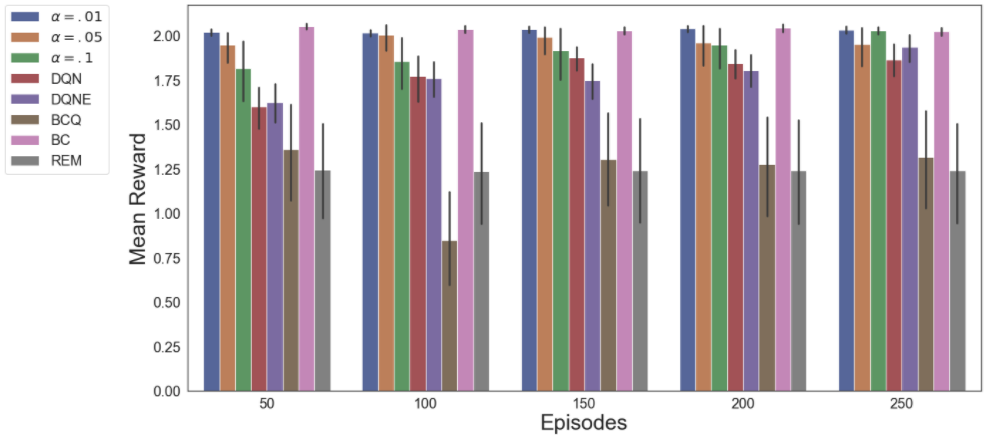}
		\caption{Mean reward for a $\epsilon=0.05$ in the behavior policy, while varying number of episodes $T$ in $\pmb D_T$.}
	\end{subfigure}
	\caption{Mean test reward per episode for policies trained offline with ESRL ($\alpha = 0.01,0.05,0.1$), DQN, DQNE, BC, BCQ, and REM on Riverswim. Optimal policy expected reward is 2.}
	\label{fig: riverswim reward}
\end{figure}
\vskip-8pt
\paragraph{Policy Learning.} We first assess ESRL on Riverswim. The training set sample size $T$ is kept low to make it hard to completely learn the dynamics of the environment. We train an offline policy using ESRL with different risk aversion parameters $(\alpha=0.01,0.05,0.1)$. Figure \ref{fig: riverswim reward} (a) shows mean reward for $T=200$ episodes while varying $\epsilon$. ESRL proves to be robust to the behavior policy quality. This is expected as when $\epsilon$ is low the environment is not fully explored. This yields high variance in the $Q$ posteriors, which leads ESRL to reject the null more often and favor the behavior policy. For low quality data generating policies there is greater exploration of the environment, which yields narrower posterior distributions for the $Q$ function posteriors, leading ESRL to reject the null when it is indeed beneficial to do so. When behavior policy is almost deterministic, the smaller risk aversion parameter $\alpha$ seems to yield good results as ESRL almost always imitates the behavior policy. BC does well as it seems to estimate the expert behavior well enough regardless of the noise level. Overall $Q$-learning methods lack enough data to learn a good policy. Figure \ref{fig: riverswim reward} (b) compares methods on an almost constant behavior policy ($\epsilon=0.05$), so there is little exploration in $\pmb D_T$. ESRL is robust as wide posteriors keep it from deviating from $\pi$. Methods other than BC generally fail likely to lack of exploration in $\pmb D_T$. However note that in real world data $\pi^0$ is not necessarily optimal, in which case BC will likely not perform very well relative to ESRL or others if there is a high-noise expert policy, which yields a well explored MDP, this is the case in the Sepsis results shown in Figure \ref{fig:sepsis_post} (c). Finally it's worth noting that REM does better than DQNE in Riverswim but not on Sepsis, we believe this is because the DQN neural networks are smaller, REM outperforms DQNE in a more complex and higher variance setting with more training data such as the Sepsis setting in Section \ref{section: sepsis}.

\begin{wrapfigure}[17]{R}{0.5\textwidth}
	\includegraphics[height=.165\textheight]{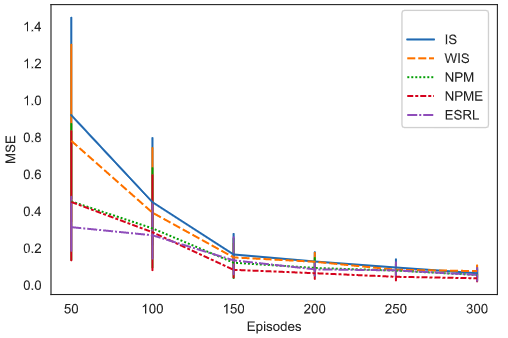}
	\caption{Mean squared error and 95\% confidence bands for OPPE of an ESRL policy. We compare step importance sampling (IS), step weighted IS (WIS), a non parametric model (NPM), an NPM ensemble (NPME) and ESRL estimation.}
	\label{fig: value func}
\end{wrapfigure}

Figure \ref{fig: value func} shows Mean Squared Error (MSE) and 95\% confidence bands for value estimation of an ESRL policy using $\pmb D_T$ while varying $T$. We compare it with sample-based estimates: step importance sampling (IS), and step weighted IS (WIS),{ and model based estimates which use a full non parametric model (NPM), and an NPM ensemble (NPME). The non parametric models compute the rewards and transition probability tables based on observed counts. The policy is evaluated by using the tables as an MDP model where states are drawn using the estimated transition probability matrix. NPM uses 1000 episodes to evaluate a policy, NPME is an average over 100 NPM estimates. In small data sets ESRL performs substantially better as it uses the model posteriors to overcome rarely visited states in $\pmb D_T$. Eventually the priors (which are miss-specified for some state-action pairs) loose importance and ESRL converges to the non-parametric estimates. Sample based estimates are consistently less efficient but converge to the true policy with enough data.}

\vskip-8pt
\paragraph{Hypothesis testing and interpretability with $Q$ function posterior distributions.} We illustrate interpretability of the ESRL method in Riverswim as it is a simple, intuitive setting. Figure \ref{riverswim Q posteriors} shows 3 $Q$ function posterior distributions $f_Q(\cdot|s,t,\pmb D_T)$, each for a fixed state-time pair $(s,t)$. Display (a) shows $Q$ functions for the far left state and an advanced time point $t=17$. There is high certainty (no overlap in posteriors) that swimming left will yield a higher reward, as left is successful w.p. 1. $Q_{17}(0,\mathit{left)}$ has a wider posterior as this $(s,a)$ is not common in $\pmb D_T$. 
Display (b) is the most interesting, it sheds light into the utility of uncertainty measures. A naive RL method that only considers mean values, would choose the optimal action according to $\mu$: swimming left. However, there is high uncertainty associated with such a choice. In fact, we know that the optimal strategy in Riverswim is $\pi(2,2)=\mathit{right}$, hypothesis testing will fail to reject the null and use the behavior action which will lead to a higher expected reward. Display (c) shows $Q$ posteriors for the state furthest to the right, at $t=5$. Choosing right will be successful with high certainty: narrow $Q_5(5,\mathit{left})$ posterior. Swimming left will still yield a relatively high reward as in the next time point the agent will proceed with the optimal policy (choosing right). As there is no overlap in (a) and (c), the best choice is clear as would be reflected with a hypothesis test.

\vskip-8pt
\begin{figure}
	\centering
	\begin{subfigure}{.32\textwidth}
		\centering
		\includegraphics[height=.12\textheight]{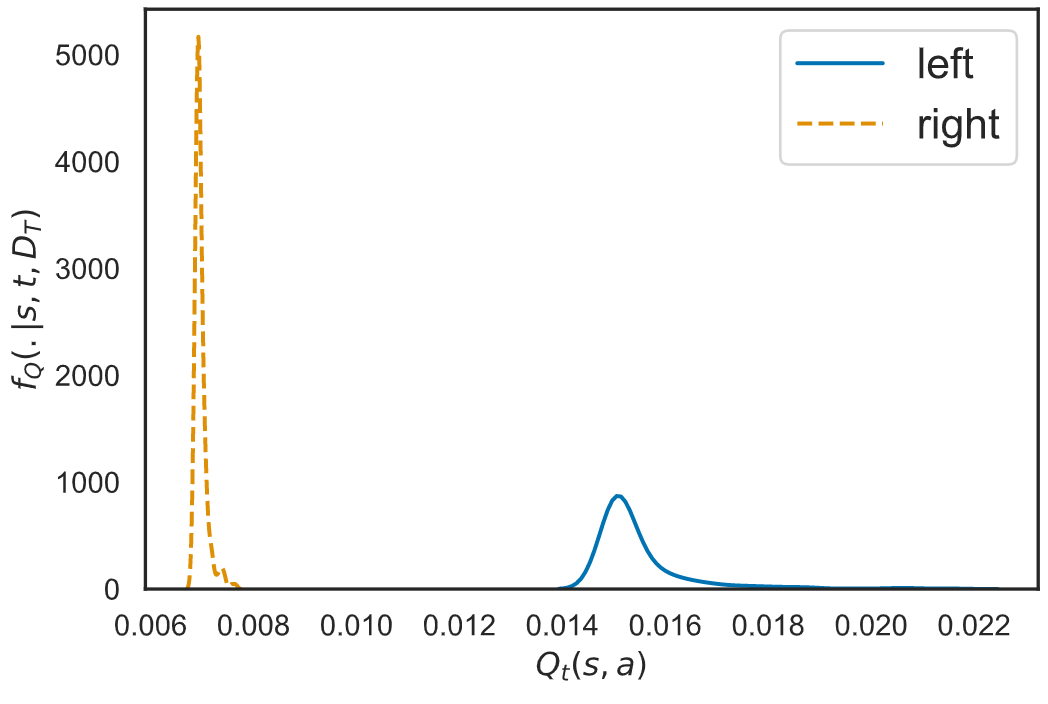}
		\caption{$f_Q(\cdot|s=0,t=17,\pmb D_T)$}
	\end{subfigure}~
	\begin{subfigure}{.32\textwidth}
		\centering
		\includegraphics[height=.12\textheight]{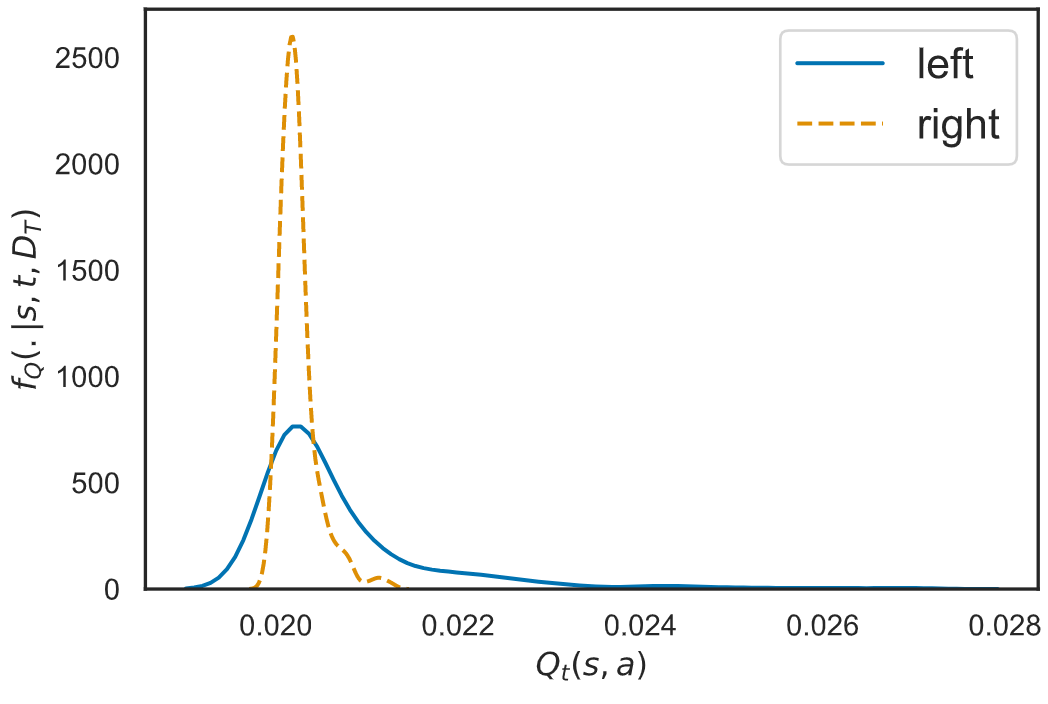}
		\caption{$f_Q(\cdot|s=2,t=2,\pmb D_T)$}
	\end{subfigure}
	\begin{subfigure}{.32\textwidth}
		\centering
		\includegraphics[height=.12\textheight]{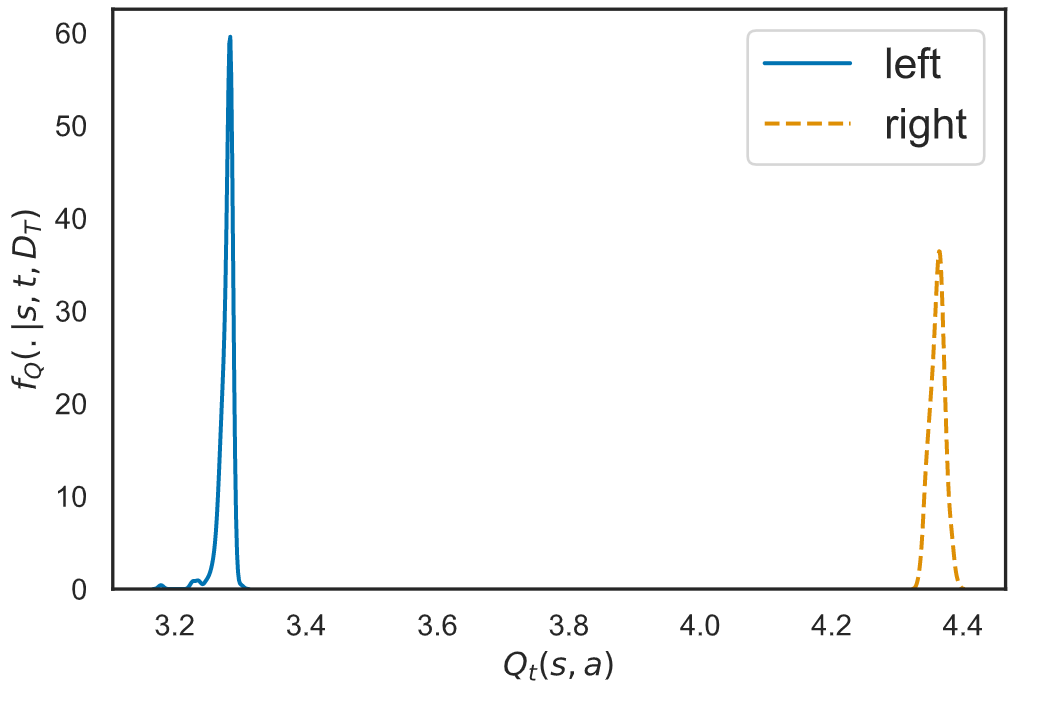}
		\caption{$f_Q(\cdot|s=5,t=5,\pmb D_T)$}
	\end{subfigure}
	\caption{Posterior distributions of $Q_t(s,a)$ functions for fixed $(s,t)$. We use $K=250$ MDP samples. Observed data, $\pmb D_T$ has $T=1000$ episodes, generated with $\epsilon =0.2$.}
	\label{riverswim Q posteriors}
\end{figure}

\begin{figure}[!tb]
	\centering
	\begin{subfigure}{.35\textwidth}
		\centering
		\includegraphics[height=.14\textheight]{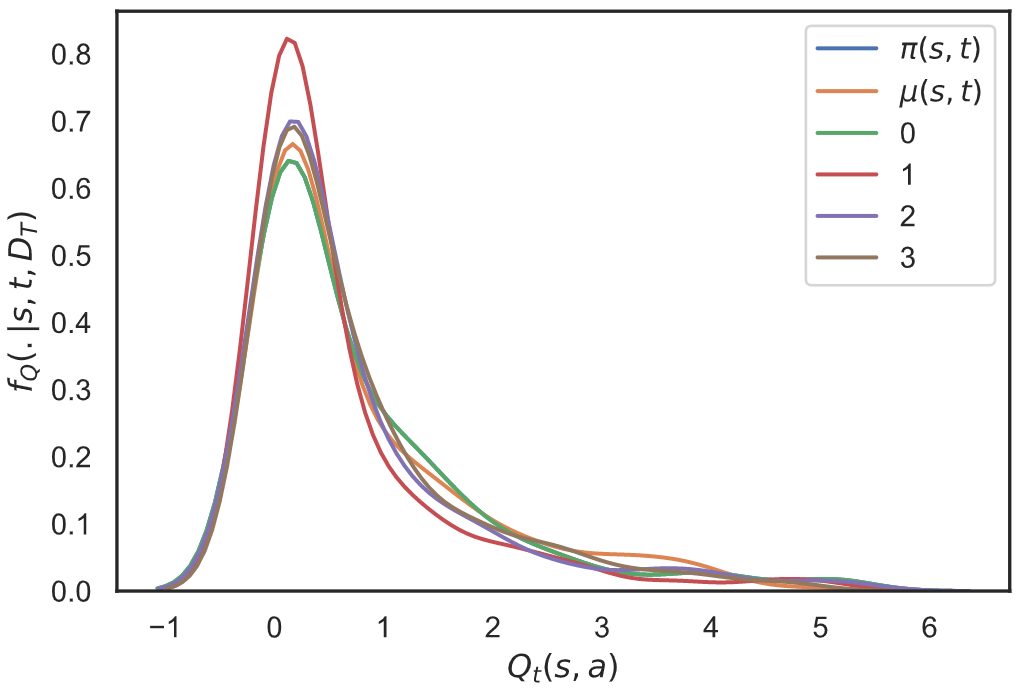}
		\caption{$Q$ function posterior distributions for $(s,t)=(90,7)$,\\ $a\in\{0,1,2,3,\pi(90,7),\mu(90,7)\}$.}
	\end{subfigure}~
	\begin{subfigure}{.35\textwidth}
	\centering
	\includegraphics[height=.14\textheight]{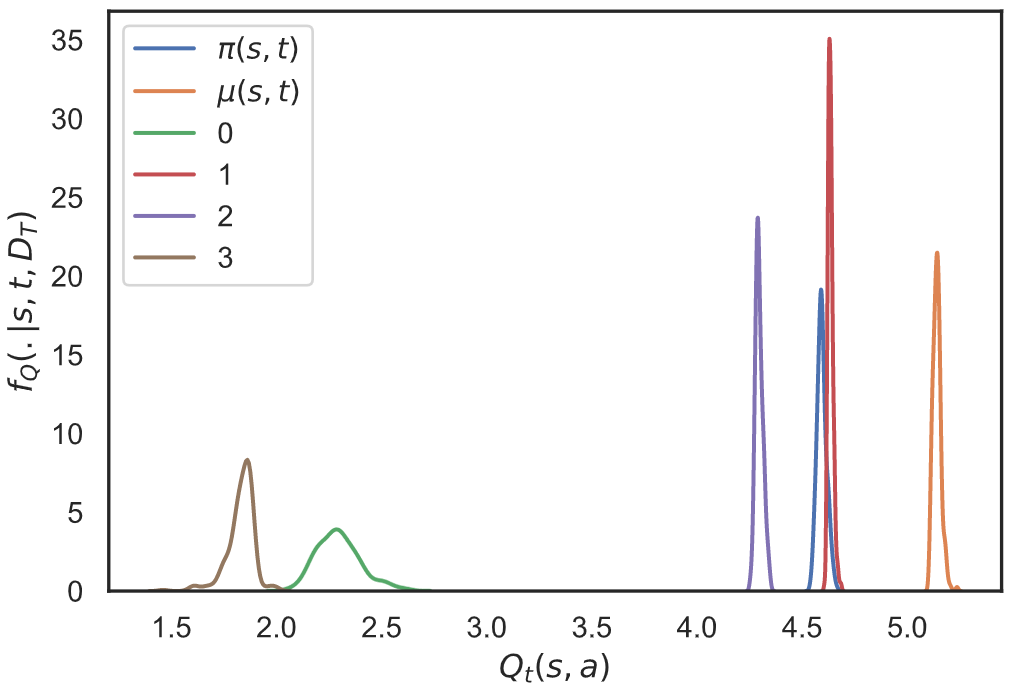}
	\caption{$Q$ function posterior distributions for $(s,t)=(5,8)$,\\ $a\in\{0,1,2,3,\pi(5,8),\mu(5,8)\}$.}
\end{subfigure}~
	\begin{subfigure}{.35\textwidth}
		\centering
		\includegraphics[height=.135\textheight]{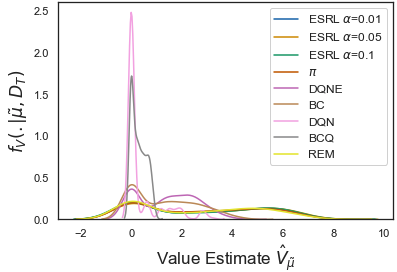}
		\caption{Posterior distribution for $\hat V_{\hat\mu}$,\\ for $\pi$, ESRL, BC, BCQ, DQN,\\ DQNE,  REM.}
	\end{subfigure}
	\caption{Display (a) \& (b) show posterior distributions of $Q$ functions at fixed $(s,t)$. Display (c) shows posteriors $\hat V$ for policies: $\pi$ and $\mu^\alpha$ for $\alpha=0.01,0.05,0.1$, DQN, DQNE, BC, BCQ and REM for $K=500$.}
	\label{fig:sepsis_post}
\end{figure}
\vskip-8pt
\subsection{Sepsis.}\label{section: sepsis} We further test ESRL on a Sepsis data set built from MIMIC-III \citep{mimic}. Sepsis is a state of infection where the immune system gets overwhelmed and can cause tissue damage, organ failure, and death. Deciding treatments and medication dosage is a dynamic and highly challenging task for the clinicians. We consider an action space representing dosage of intravenous fluids for hypovolemia (IV fluids) and vasopressors to counteract vasodilation. The action space $\mathcal{A}$ is size 25: a $5\times5$ matrix over discretized dose of vasopressors and IV fluids. The state space is composed of 1,000 clusters estimated using K-means on a 46-feature space which contains measures of the patient's physiological state. We used negative SOFA score as a reward \citep{Raghu}, we transform it to be between 0 and 1. The data set used has 12,991 episodes of 10 time steps- measurements every 4-hour interval. We used 80\% of episodes for training and 20\% for testing. 

Figure \ref{fig:sepsis_post} (a) \& (b) show posterior distributions for two different $(s,t)$ pairs in the Sepsis data set hand-picked to illustrate interpretability. For simplicity we restrict to show the best action: $\mu(s,t)$, physician's action $\pi(s,t)$, and four other low dose actions $a\in\{0,\dots,3\}$. Display (a) shows posterior distributions over a state rarely observed in $\pmb D_T$, hence the $Q$ functions have relatively high standard errors. The expected cumulative inverse SOFA value for this state seems to be relatively stable no matter what action is taken. The $Q$ posteriors for  $\mu$ and $\pi$ are practically overlapping so there's no reason to deviate from $\pi$, this is encoded into $\mu^\alpha$ through hypothesis testing. Interpertability is useful in these cases, as a physician might see there is no difference in actions: all will yield similar SOFA scores. Therefore, an action can be chosen to lower risk of side effects. 
Display (b) on the other hand shows a common state in $\pmb D_T$: the low standard errors allow the policy to deviate from $\pi$ at any $\alpha$ level. Within this state, actions $\pi$ and $\mu$ are usually selected so the posteriors for their $Q$ functions are narrow, as opposed to those for $a=0,3$. These actions are not prevalent in $\pmb D_T$ as they seem to be sub-optimal, so they are less often chosen by doctors and seen in $\pmb D_T$. 

Figure \ref{fig:sepsis_post} (c) shows the posterior distribution of the Sepsis value function for different policies. There seems to be a bi-modal distribution: it is easier to control the SOFA scores for patients in the set of states shown in the right mode of the distribution. Physicians know how to do this well as shown by the posterior value function for $\pi$; and ESRL picks up on this. The other clusters of states in the left mode seem to be harder to control. We can appreciate how deviating from the physician's policy is strikingly damaging to the expected value on the test set. DQN and BCQ, DQNE and BC generalize better but under preform relative to ESRL and REM. The $\pmb D_T$ is probably not enough to generalize to the test set due to the high dimensional state and action spaces. ESRL through hypothesis testing captures this and hardly deviates from the behavior policy. Thus, it is clear that we cannot do better than $\pi$ given the information in the data, but the posterior suggest the need to learn safe policies as we can do substantially worse with methods that don't account for uncertainty and safety.
\vskip-8pt
\section{Conclusion}
\vskip-8pt
We propose an Expert-Supervised RL (ESRL) approach for offline learning based on Bayesian RL. This framework can learn safe policies from observed data sets. It accounts for uncertainty in the MDP and data logging process to assess when it is safe and beneficial to deviate from the behavior policy. ESRL allows for different levels of risk aversion, which are chosen within the application context. We show a $\tilde{\mathcal{O}}(\tau S\sqrt{AT})$ Bayesian regret bound that is independent of the risk aversion level tailored to the environment and noise level in the data set. The ESRL framework can be used to obtain interpretable posterior distributions for the $Q$ functions and for OPPE. These posteriors are flexible to account for any possible policy function and are amenable to interpretation within the context of the application. An important limitation of ESRL is that it cannot readily handle continuous state spaces which are common in real world applications. Another extension we are interested in is in exploring the comparison of credible intervals as opposed to the null probability estimates. We believe ESRL is a step towards bridging the gap between RL research and real-world applications.
\vskip-8pt
\section*{Broader Impact}
\vskip-8pt
We believe ESRL is a tool that can help bring RL closer to real-world applications. In particular this will be useful in the clinical setting to find optimal dynamic treatment regimes for complex diseases, or at least assist in treatment decision making. This is because ESRL's framework lends itself to be questioned by users (physicians) and sheds light into potential biases introduced by the data sampling mechanism used to generate the observed data set. Additionally, using hypothesis testing and accommodating different levels of risk aversion makes the method sensible to offline settings and different real-world applications. It is important when using ESRL and any RL method, to question the validity of the policy's decisions, the quality of the data, and the method that was used to derive these.
\begin{ack}
We thank Eric Dunipace for great discussions on Bayesian hypothesis testing and the reviewers for thoughtful feedback, especially regarding state-of-the-art benchmark methods. Funding in support of this work is in part provided by Boehringer Ingelheim Pharmaceuticals. Leo A. Celi is funded by the National Institute of Health through NIBIB R01 EB017205.
\end{ack}

\bibliographystyle{unsrt}
\bibliography{references}
\newpage
\begin{appendices}
\setcounter{algocf}{1}
\section*{Supplementary Material for Expert-Supervised Reinforcement Learning for Offline Policy Learning and Evaluation}
\section{Off-Policy Policy Evaluation and Uncertainty Estimation}\label{OPPE}
In this Section, we follow the lines of Section 4 in the main text with more discussion. We show an Algorithm that collects the ideas presently discussed and an additional Lemma regarding the convergence of the null probability estimator. 

We leverage $f(\cdot|\pmb D_T)$ to estimate the value function for any policy, and use hypothesis testing for whether there is a meaningful difference in two policy functions (i.e. $\mu^\alpha$ vs. $\pi$). Recall, we compute the estimated value of a given policy $\tilde\mu$, by sampling $K$ models from the posterior and navigating $M_k$ using $\tilde\mu$ to obtain $V_{\tilde\mu,1}^{M_k}\sim f_V(\cdot|\pmb D_T)$. We estimate $\mathbb{E}\left[V^{M^*}_{\tilde\mu}|\pmb D_T\right]$ with $\hat  V_{\tilde\mu}=\frac{1}{K}\sum_{k=1}^KV^{(k)}_{\tilde\mu,1}$. This process is shown in Algorithm \ref{algorithm: value function}.

 \begin{algorithm}[H]
 	\SetAlgoLined
 	\For{$k=1,\dots,K$}{
 		Set $V^{(k)}_0\leftarrow0$\;
 		Sample $M_k\sim f(\cdot|\pmb D_T)$, $k=1,\dots,K$\;
 		Sample $s\sim P_0^{M_k}$\;
 		\For{$t=1,\dots,\tau$}{
 				$a\leftarrow\tilde\mu(s,t)$\;
				$V^{(k)}_t\leftarrow\; V^{(k)}_{t-1}+\bar R^{M_k}(s,a)$\;
				Sample $s'\sim P^{M_k}_a(s'|s)$\;
				Set $s\leftarrow s'$\;
 		}
 	Set $V^{(k)}_{\tilde\mu,1}\leftarrow V^{(k)}_\tau$\;
 	}
 	\caption{Value function estimation\label{algorithm: value function}}
 \end{algorithm}

Note that we average over the initial states as well, as we are interested to know the marginal value of the policy. A conditional value of the policy function $V_{\hat\mu,1}^{M^*}(s)$ can also be computed simply by starting all samples at a fixed state. Analogous to Section 3, we use samples  $\left\{V^{(k)}_{\tilde\mu,1}\right\}_{k=1}^K$ to define a $(1-\alpha)$ CI using the $\alpha$ and $1-\alpha$ quantiles. Note that for policies which are very different from the behavior policy, the posterior distribution will have wider CIs due to the wide distribution shift. This signals that there is not enough information in $\pmb D_T$ for the rarely visited state-action pairs $(s,a)$. This happens with OPPE importance sampling  estimators as well \cite{Gottesmancorr2019}. As opposed to only considering point estimators of the value function, these CI help to assess whether the estimated value is likely to be accurate or if the estimate is unreliable given the information in $\pmb D_T$. Importance sampling based estimators reflect this large distribution shift in high variance estimators. 

\paragraph{Policy-level hypothesis testing.} We use Algorithm \ref{algorithm: value function} to assess whether there is a statistically significant difference in value from two different policies. Define the value function null hypothesis for two fixed policies $\tilde\mu_1,\tilde\mu_2$ as the event in which policy $\tilde\mu_1$ has a higher expected value than $\tilde\mu_2$ conditional on $\pmb D_T$: $H_0:\mathbb{E}_{s\sim P_0,M^*}\left[V_{\tilde\mu_1}(s)|\pmb D_T\right]>\mathbb{E}_{s\sim P_0,M^*}\left[V_{\tilde\mu_2}(s)|\pmb D_T\right]$. The probability of the null under the true model $M^*$ is 
\begin{align*}
\mathbb{P}^*_\mu\left(H_0|\pmb D_T\right)
=&
\mathbb{P}\left(V^{M^*}_{\tilde\mu_1}(s)>V_{\tilde\mu_2}^{M^*}(s)\bigg|\pmb D_T\right)
=
\sum_{s\in\mathcal{S}}P_0(s)\mathbb{P}\left(V_{\tilde\mu_1}(s)>V_{\tilde\mu_2}(s)\bigg|s,\pmb D_T\right).
\end{align*}
We use the following estimator from samples generated from Algorithm \ref{algorithm: value function}:
\begin{align}\label{def: mu p-val}
\hat{\mathbb{P}}_\mu\left(H_0|\pmb D_T\right)
=&
\frac{1}{K}\sum_{k=1}^KI\left(V^{M_k}_{\tilde\mu_1}(s)-V^{M_k}_{\tilde\mu_2}(s)>0\right). 
\end{align}

\begin{lemma}\label{lemma: V H_0 concent}
	Let $\mu_1$, $\mu_2:\mathcal{S}\times\{1,\dots,\tau\}$ be two pre-specified policy functions, and let $\hat P_\mu(H_0|\pmb D_T)$ be defined as in \eqref{def: mu p-val},
	\[
	\hat{\mathbb{P}}_\mu\left(H_0|\pmb D_T\right)-\mathbb{P}_\mu\left(H_0|\pmb D_T\right)=O_p\left(K^{-\frac{1}{2}}\right),
	\]
\end{lemma}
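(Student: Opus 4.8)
The plan is to recognize that, conditionally on $\pmb D_T$, the estimator $\hat{\mathbb{P}}_\mu(H_0|\pmb D_T)$ is an empirical mean of i.i.d.\ bounded random variables whose common conditional mean is exactly the target $\mathbb{P}_\mu(H_0|\pmb D_T)$, and then to apply a conditional concentration bound — the same template used for Lemma~\ref{lemma: q function concent} and Theorem~\ref{theorem: V function}. Concretely, fix $\pmb D_T$ and recall that in Algorithm~\ref{algorithm: value function} the models $M_1,\dots,M_K$ are drawn i.i.d.\ from the posterior $f(\cdot|\pmb D_T)$ (each with its own initial-state draw). Set $Z_k = I\big(V^{M_k}_{\tilde\mu_1}(s)-V^{M_k}_{\tilde\mu_2}(s)>0\big)$; the $Z_k$ are i.i.d.\ $\{0,1\}$-valued given $\pmb D_T$, and $\hat{\mathbb{P}}_\mu(H_0|\pmb D_T)=\frac1K\sum_{k=1}^K Z_k$.

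First I would pin down the conditional mean. Since $M_k$ and the true model $M^*$ share the conditional law $f(\cdot|\pmb D_T)$ — the exchangeability argument already used in the proofs of Lemma~\ref{lemma: q function concent} and Theorem~\ref{theorem: V function} — we get $\mathbb{E}[Z_k|\pmb D_T]=\mathbb{P}\big(V^{M^*}_{\tilde\mu_1}(s)>V^{M^*}_{\tilde\mu_2}(s)\,\big|\,\pmb D_T\big)=\mathbb{P}_\mu(H_0|\pmb D_T)$, with the marginalization over the initial state $s\sim P_0$ absorbed into the draw of $M_k$ exactly as in the definition of $\mathbb{P}_\mu$. Hence $\mathbb{E}[\hat{\mathbb{P}}_\mu(H_0|\pmb D_T)|\pmb D_T]=\mathbb{P}_\mu(H_0|\pmb D_T)$ (conditional unbiasedness), and $\hat{\mathbb{P}}_\mu-\mathbb{P}_\mu=\frac1K\sum_{k=1}^K\big(Z_k-\mathbb{E}[Z_k|\pmb D_T]\big)$ is a centered i.i.d.\ average.

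Next I would bound the fluctuation. Conditionally on $\pmb D_T$ the summands are independent, mean zero, and bounded in $[-1,1]$ with $\mathrm{Var}(Z_k|\pmb D_T)\le\tfrac14$, so $\mathrm{Var}\big(\hat{\mathbb{P}}_\mu|\pmb D_T\big)\le\tfrac1{4K}$. Chebyshev's inequality applied conditionally gives, for every $\eta>0$, $\mathbb{P}\big(\sqrt K\,|\hat{\mathbb{P}}_\mu-\mathbb{P}_\mu|>\eta\,\big|\,\pmb D_T\big)\le\tfrac1{4\eta^2}$; this bound is free of $\pmb D_T$, so it survives taking expectation over $\pmb D_T$, yielding $\hat{\mathbb{P}}_\mu(H_0|\pmb D_T)-\mathbb{P}_\mu(H_0|\pmb D_T)=O_p(K^{-1/2})$, which is the claim. (A Hoeffding bound would further give sub-Gaussian tails, but $O_p$ requires no more than the second moment.)

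The argument is essentially routine; the only point requiring care — and the main obstacle — is verifying that the summands are genuinely i.i.d.\ with mean equal to the \emph{target} $\mathbb{P}_\mu(H_0|\pmb D_T)$. That forces one to be precise about the initial state: the sum $\sum_{s}P_0(s)(\cdots)$ in the definition of $\mathbb{P}_\mu$ must match the initial-state draw inside Algorithm~\ref{algorithm: value function} (identifying $P_0^{M_k}$ with the known $P_0$), and one must note that $\mathbb{P}_\mu(H_0|\pmb D_T)$ is a fixed, $K$-independent functional of $\pmb D_T$, so that $\hat{\mathbb{P}}_\mu-\mathbb{P}_\mu$ is a centered average rather than something whose center also drifts with $K$.
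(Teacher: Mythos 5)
Your proposal is correct and follows essentially the same route as the paper: identify the conditional mean of the indicators as $\mathbb{P}_\mu(H_0|\pmb D_T)$ via the posterior-sampling (measurable-function) lemma, then apply a conditional concentration inequality to the i.i.d.\ bounded average. The only difference is that you use Chebyshev where the paper uses Hoeffding, which is immaterial for an $O_p(K^{-1/2})$ conclusion.
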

Lemma \ref{lemma: V H_0 concent} ensures consistency of the probability of the null-hypothesis for the value function testing. 


\section{Supporting Lemma}

\begin{lemma}\label{lemm_PS} (Lemma 1 in \cite{Osband2013})
	If $f$ is the distribution of $M^*$ then, for any $\sigma(\pmb D_T)-$measurable function $g$, and model $M_k\sim f(\cdot|\pmb D_T)$:
	\[
	\mathbb{E}\left[g(M^*)|\pmb D_T\right]
	=\mathbb{E}\left[g(M_k)|\pmb D_T\right].
	\]
\end{lemma}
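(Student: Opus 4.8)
The plan is to reduce the claim to the single statement that, conditional on the data $\pmb D_T$, the true model $M^*$ and the posterior draw $M_k$ have the same law; once that is in hand, equality of the two conditional expectations is immediate for any integrable $g$.

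First I would invoke Bayes' theorem. Since $f$ is the prior law of $M^*$ and $\pmb D_T$ is generated from $M^*$, the regular conditional distribution of $M^*$ given $\sigma(\pmb D_T)$ is, by definition, the posterior $f(\cdot|\pmb D_T)$. For this regular conditional distribution to exist I would assume the space of MDP models is a standard Borel (Polish) space, which is satisfied by the conjugate Dirichlet/multinomial and normal-gamma/normal families used throughout the paper.

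Second, by construction $M_k$ is sampled directly from $f(\cdot|\pmb D_T)$, so for $\pmb D_T$-almost every realization the conditional law of $M_k$ given $\pmb D_T$ coincides with that of $M^*$. Integrating $g$ against this common posterior kernel then yields
\[
\mathbb{E}\left[g(M^*)\,\big|\,\pmb D_T\right]
=\int g(M)\,f(dM|\pmb D_T)
=\mathbb{E}\left[g(M_k)\,\big|\,\pmb D_T\right].
\]
The $\sigma(\pmb D_T)$-measurability caveat is precisely what lets $g$ depend on both the model and the data: once we condition on $\pmb D_T$, any data-dependent part of $g$ is frozen, so the only remaining randomness is in the model and is governed by the shared posterior.

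The main difficulty is measure-theoretic rather than computational: one must justify existence of the regular conditional distribution, the interchange of conditional expectation with integration against the posterior kernel, and integrability of $g(M^*)$ and $g(M_k)$ so that both sides are finite. Because the statement is verbatim Lemma 1 of \cite{Osband2013}, I would keep the argument brief and defer the measure-theoretic bookkeeping to that reference, emphasizing that the entire content is the observation that a posterior draw and the true parameter are identically distributed given the data.
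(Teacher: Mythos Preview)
Your proposal is correct and matches exactly what the paper does: the paper states this lemma as a supporting result with attribution to \cite{Osband2013} and provides no proof of its own, so your plan to give the one-line ``posterior draw and true parameter share the same conditional law given $\pmb D_T$'' argument and then defer measure-theoretic details to that reference is precisely the right approach.
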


\section{Proof of results in main body}

\subsection{Theorem 3.4}\label{section: trhm 3.4}
In this Subsection we develop the necessary definitions and lemmas, and eventually go on to prove Theorem 3.4. To simplify notation let $\mathbb{P}^*(H_0)\equiv\mathbb{P}^*(H_0|s,t,\pmb D_T)$ and $\hat{\mathbb{P}}(H_0)\equiv\hat{\mathbb{P}}(H_0|s,t,\pmb D_T)$. Given the behavior policy as defined in Algorithm 1 and the optimal policy under the true MDP $M^*$, we can write the ESRL policy obtained from any $M_k$ sample from Algorithm 1, and it's equivalent version under $M^*$ as:
	\begin{align*}
	\mu_k^\alpha(s,t)&=I\left(\hat{\mathbb{P}}(H_0)<\alpha\right)\mu_k(s,t)+I\left(\hat{\mathbb{P}}(H_0)\ge\alpha\right)\pi(s,t),\\
	\mu^{\alpha*}(s,t)&=I\left(\mathbb{P}^*(H_0)<\alpha\right)\mu^*(s,t)+I\left(\mathbb{P}^*(H_0)\ge\alpha\right)\pi(s,t),
	\end{align*}
	we show our result is true for any $\mu^\alpha_k$ and thus it's true for the ESRL policy $\mu^{\alpha}$. Next we define the policy $\mu^\alpha_k$ which uses the true null probabilities and $\mu_k$ as:
	\[
	\mu^{\alpha*}_k(s,t)=I\left(\mathbb{P}^*(H_0)<\alpha\right)\mu_k(s,t)+I\left(\mathbb{P}^*(H_0)\ge\alpha\right)\pi(s,t).
	\]
	finally let
	\begin{align*}
	\Delta_i^\mu
	&=
	\sum_{s\in\mathcal{S}}P_0(s)\left(
	V_{\mu^{\alpha*}_k,1}^{M^*}(s)
	-
	V_{\mu_k^{\alpha},1}^{M^*}(s)\right)\\
	\Delta_i^*
	&=
	\sum_{s\in\mathcal{S}}P_0(s)\left(V_{\mu^{\alpha*}_k,1}^{M_k}(s)
	-
	V_{\mu^{\alpha*}_k,1}^{M^*}(s)
	\right).
	\end{align*}

	Consider function 
	$g:M\mapsto V_{\mu^\alpha*,1}^{M}$, $g$ is $\sigma(\pmb D_T)$ measurable for a fixed $\alpha\in[0,1]$ as $\pi(s,t)$, $\mathbb{P}^*(H_0)$ are fixed $\forall (s,t)\in\mathcal{S}\times\{1,\dots,\tau\}$, thus, by Lemma \ref{lemm_PS} for any $M_k\sim f(\cdot|\pmb D_T)$
	\[
	\mathbb{E}\left[V_{\mu^{\alpha*}_k,1}^{M_k}(s)|\pmb D_T\right]=\mathbb{E}\left[V_{\mu^{\alpha*},1}^{M^*}(s)|\pmb D_T\right],
	\] now using iterated expectations we get $
	\mathbb{E}\left[V_{\mu^{\alpha*}_k,1}^{M_k}(s)\right]=\mathbb{E}\left[V_{\mu^{\alpha*},1}^{M^*}(s)\right]$.
	
We use this to re-express the expected regret for episode $i$ under model $k$ computed with Algorithm 1 as 
\begin{align*}
\mathbb{E}\left[\Delta_{i}\right]
&=
\mathbb{E}\left[
\sum_{s\in\mathcal{S}}P_0(s)\left(V_{\mu^{\alpha*},1}^{M^*}(s)-V_{\mu_k^{\alpha},1}^{M^*}(s)\right)
\right]\\
&=
\sum_{s\in\mathcal{S}}P_0(s)\left(\mathbb{E}\left[V_{\mu^{\alpha*},1}^{M^*}(s)\right]-\mathbb{E}\left[V_{\mu_k^{\alpha},1}^{M^*}(s)\right]\right)\\
&=
\sum_{s\in\mathcal{S}}P_0(s)\left(\mathbb{E}\left[V_{\mu^{\alpha*}_k,1}^{M_k}(s)\right]-\mathbb{E}\left[V_{\mu_k^{\alpha},1}^{M^*}(s)\right]\right)\\
&=
\mathbb{E}\left[\Delta_i^*\right]+\mathbb{E}\left[\Delta_i^\mu\right],\\
\end{align*}
where the last step follows from adding and subtracting $\mathbb{E}\left[V_{\mu^{\alpha*}_k,1}^{M^*}(s)\right]$.

We first consider $\mathbb{E}\left[\Delta_i^*\right]$, we use a strategy similar to \cite{Osband2013}, but do not make an $iid$ assumption for within-episode observations. Define the following Bellman operator $\mathcal{T}_{\mu^\alpha}^M$ for any MDP $M$, policy $\mu^\alpha$, and value function $V$ to be
\begin{align}\label{bellman_alpha}
\mathcal{T}_{\mu^\alpha}^MV(s)&=\bar R^M(s,\mu^\alpha(s,t))+\sum_{s'\in\mathcal{S}}P^M_{\mu^\alpha(s,t)}(s'|s)V(s'),
\end{align}
this lets us write $V_{\mu^\alpha,t}^M(s)=\mathcal{T}_{\mu^\alpha}^MV_{\mu^\alpha,t+1}^M(s)$.\\
The next Lemma will let us express term $\mathbb{E}\left[\Delta_i^*\bigg|M^*,M_k\right]$ in terms of the Bellman operator.

\begin{lemma}\label{lemma: delta hat}
	If $f$ is the distribution of $M^*$, then 
	\[
	\mathbb{E}\left[\Delta_i^*\bigg|M^*,M_k\right]
	=
	\mathbb{E}\left[\sum_{j=1}^\tau
	\left(
	\mathcal{T}_{\mu^{\alpha*}_k(\cdot,j)}^{M_k}
	-\mathcal{T}_{\mu^{\alpha*}_k(\cdot,j)}^{M^*}
	\right)
	V^{M_k}_{\mu^{\alpha*}_k,j+1}(s_{j+1})
	\bigg|M^*,M_k\right].
	\]	
\end{lemma}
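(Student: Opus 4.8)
The plan is to expand the telescoping difference of value functions along the trajectory using the Bellman recursion \eqref{bellman_alpha}, exactly in the spirit of the regret decomposition in \cite{Osband2013}, but being careful not to invoke an i.i.d.\ assumption on the within-episode transitions. Recall $\Delta_i^* = \sum_{s\in\mathcal{S}}P_0(s)\bigl(V_{\mu^{\alpha*}_k,1}^{M_k}(s) - V_{\mu^{\alpha*}_k,1}^{M^*}(s)\bigr)$, and since $s_1\sim P_0$ this is just $\mathbb{E}\bigl[V_{\mu^{\alpha*}_k,1}^{M_k}(s_1) - V_{\mu^{\alpha*}_k,1}^{M^*}(s_1)\bigm|M^*,M_k\bigr]$ where the outer expectation is over the trajectory generated by running $\mu^{\alpha*}_k$ in the \emph{true} MDP $M^*$.

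First I would write, for each time step $j$, using $V_{\mu^{\alpha*}_k,j}^{M_k}(s) = \mathcal{T}_{\mu^{\alpha*}_k(\cdot,j)}^{M_k}V_{\mu^{\alpha*}_k,j+1}^{M_k}(s)$ and similarly for $M^*$, the identity
\[
V_{\mu^{\alpha*}_k,j}^{M_k}(s_j) - V_{\mu^{\alpha*}_k,j}^{M^*}(s_j)
=
\bigl(\mathcal{T}_{\mu^{\alpha*}_k(\cdot,j)}^{M_k} - \mathcal{T}_{\mu^{\alpha*}_k(\cdot,j)}^{M^*}\bigr)V_{\mu^{\alpha*}_k,j+1}^{M_k}(s_j)
+
\mathcal{T}_{\mu^{\alpha*}_k(\cdot,j)}^{M^*}\bigl(V_{\mu^{\alpha*}_k,j+1}^{M_k} - V_{\mu^{\alpha*}_k,j+1}^{M^*}\bigr)(s_j).
\]
The key observation for the second term is that $\mathcal{T}_{\mu^{\alpha*}_k(\cdot,j)}^{M^*}\bigl(V_{\mu^{\alpha*}_k,j+1}^{M_k} - V_{\mu^{\alpha*}_k,j+1}^{M^*}\bigr)(s_j) = \mathbb{E}\bigl[V_{\mu^{\alpha*}_k,j+1}^{M_k}(s_{j+1}) - V_{\mu^{\alpha*}_k,j+1}^{M^*}(s_{j+1})\bigm|s_j, M^*, M_k\bigr]$, since $s_{j+1}\sim P^{M^*}_{\mu^{\alpha*}_k(s_j,j)}(\cdot|s_j)$ is precisely the law of the next state under the trajectory-generating process (the reward terms inside the two copies of $\mathcal{T}^{M^*}$ cancel). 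Then I would iterate this identity from $j=1$ up to $j=\tau$, using the tower property of conditional expectation at each stage to absorb the "remainder" term into the next level, and using $V_{\mu^{\alpha*}_k,\tau+1}^{M_k} = V_{\mu^{\alpha*}_k,\tau+1}^{M^*} = 0$ to terminate the recursion. This yields
\[
\mathbb{E}\bigl[\Delta_i^*\bigm|M^*,M_k\bigr]
=
\mathbb{E}\left[\sum_{j=1}^\tau \bigl(\mathcal{T}_{\mu^{\alpha*}_k(\cdot,j)}^{M_k} - \mathcal{T}_{\mu^{\alpha*}_k(\cdot,j)}^{M^*}\bigr)V_{\mu^{\alpha*}_k,j+1}^{M_k}(s_j)\biggm| M^*, M_k\right],
\]
which is the claimed identity up to the index shift $s_j$ versus $s_{j+1}$ in the statement (I would reconcile this by noting the statement's indexing convention, or adjust to $s_j$).

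The main obstacle I anticipate is purely bookkeeping rather than conceptual: getting the conditioning/filtration right so that the tower property applications are valid without secretly assuming independence across within-episode steps. Because $\mu^{\alpha*}_k$ is a fixed (deterministic, $\sigma(\pmb D_T)$-measurable) policy and $M_k$, $M^*$ are fixed by the conditioning, the process $s_1, s_2,\dots$ is Markov under $M^*$, and the value functions $V_{\mu^{\alpha*}_k,j+1}^{M_k}$ are deterministic functions of the current state, so each remainder term is genuinely a conditional expectation given $s_j$ only — no i.i.d.\ structure is needed, just the Markov property. The one point requiring a line of care is that the $\mathcal{T}$-difference term at step $j$ is itself a function of $s_j$, so when I take the outer expectation over the $M^*$-trajectory it becomes $\mathbb{E}[\cdot|M^*,M_k]$ against the marginal law of $s_j$ under $\mu^{\alpha*}_k$ in $M^*$; collecting all $\tau$ such terms gives the stated sum. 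I would close by remarking that this is where the proof departs from \cite{Osband2013}, which could treat within-episode transitions more loosely.
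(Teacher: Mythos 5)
Your proposal is correct and follows essentially the same route as the paper: the paper performs the identical Bellman decomposition into $\mathcal{W}_j + \mathbb{T}_j$, makes your "tower property" step explicit by introducing martingale-difference error terms $e_j$ with $\mathbb{E}[e_j\,|\,M_k,M^*]=0$, and telescopes to $\sum_j \mathcal{W}_j + e_j$. Your observation about the $s_j$ versus $s_{j+1}$ indexing is also well taken — the paper itself is inconsistent on this point between the lemma statement and the last line of its proof.
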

We now define a confidence set for the reward and transition estimated probabilities.
\begin{lemma}\label{lemm_confidence_bound}
	Let $\mathcal{I}$ denote the set of index $i,j$ for episodes in $\pmb D_T=\{(s_{i1},a_{i1},r_{i1},\dots,s_{i\tau },a_{i\tau},r_{i\tau})\}_{i=1}^T$, that is: $\mathcal{I}=\left\{(i,j)\bigg|i\in\{1,\dots,T\},j\in\{1,\dots,\tau\}\right\}$. Further let $N_T(s,a)$ be the number of times $(s, a)$ was sampled in $\pmb D_T$: $N_T(s,a)= \sum_{i,j\in\mathcal{I}}I(S_{ij}=s,A_{ij}=a)$, let $\hat P_a(\cdot|s)$ and $\hat R(s,a)$ be non-parametric estimators for the distribution of transitions and rewards observed after sampling $T$ episodes:
	\[
	\hat P_a(s'|s)=\frac{\sum_{i,j\in\mathcal{I}}I(s_{i,j+1}=s')I(s_{ij}=s,a_{ij}=a)}{N_T(s,a)},\:\:\hat R(s,a)=\frac{\sum_{ij\in\mathcal{I}}I(s_{ij}=s,a_{ij}=a)r_{ij}}{N_T(s,a)}.
	\]
	Define the confidence set:
	\begin{align*}
	\mathcal{M}_T\equiv
	\left\{
	M:\left\|\hat P_a(\cdot|s)-P^M_a(\cdot|s)\right\|_1\le \beta_T(s,a),
	\left|\hat R(s,a)-R^M(s,a)\right|_1\le \beta_T(s,a)\:\forall (s,a)
	\right\},
	\end{align*}
	where $\beta_T(s,a)\equiv \frac{\sqrt{8ST\log(2SAT)}}{\max\{1,N_T(s,a)\}},$ then $P\left(M^*\notin\mathcal{M}_T\right)<\frac{1}{T}$.
\end{lemma}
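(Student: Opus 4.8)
The plan is to bound the probability that the empirical transition kernel $\hat P_a(\cdot\mid s)$ or the empirical reward mean $\hat R(s,a)$ deviates from the truth by more than $\beta_T(s,a)$, for any single pair $(s,a)$, and then union-bound over all pairs and all episodes. First I would fix a pair $(s,a)$ and condition on $N_T(s,a)=n$. Given $n$ observations of the next-state at $(s,a)$, the empirical distribution $\hat P_a(\cdot\mid s)$ is an average of $n$ i.i.d.\ categorical draws from $P^{M^*}_a(\cdot\mid s)$, so by the $L_1$-deviation bound for empirical distributions on an $S$-element alphabet (the standard Weissman / Bretagnolle--Huber--Carol inequality, essentially a VC / method-of-types argument),
\[
\mathbb{P}\!\left(\bigl\|\hat P_a(\cdot\mid s)-P^{M^*}_a(\cdot\mid s)\bigr\|_1 \ge \varepsilon \,\Big|\, N_T(s,a)=n\right)
\le (2^{S}-2)\,e^{-n\varepsilon^2/2}\le 2^{S}e^{-n\varepsilon^2/2}.
\]
Similarly, since $r_{ij}\in[0,1]$, Hoeffding's inequality gives $\mathbb{P}(|\hat R(s,a)-R^{M^*}(s,a)|\ge\varepsilon\mid N_T(s,a)=n)\le 2e^{-2n\varepsilon^2}$, which is dominated by the transition bound. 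The delicate point is that $N_T(s,a)$ is itself random and can be small; I would handle this exactly as in \cite{Osband2013}, by taking a union over the possible values $n=1,\dots,T$ of $N_T(s,a)$ and noting that when $N_T(s,a)=0$ the constraint is vacuous because $\beta_T(s,a)$ is then at least $\sqrt{8ST\log(2SAT)}\ge 2$, which trivially bounds any $L_1$ distance between probability vectors (at most $2$) and any reward difference (at most $1$).

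Concretely, I would set $\varepsilon=\beta_T(s,a)$ with $\beta_T(s,a)=\sqrt{8ST\log(2SAT)}/\max\{1,N_T(s,a)\}$, so that on the event $N_T(s,a)=n\ge 1$ we have $n\varepsilon^2 = n\cdot 8ST\log(2SAT)/n^2 = 8ST\log(2SAT)/n \ge 8S\log(2SAT)$ (using $n\le T$). Hence each per-$(s,a)$, per-$n$ failure probability is at most $2^{S}e^{-4S\log(2SAT)} = 2^{S}(2SAT)^{-4S}$, and likewise for the reward. Summing over the $SA$ pairs $(s,a)$, the $T$ possible counts $n$, and the two types of constraint (transitions and rewards), the total failure probability is at most $4\,SAT\cdot 2^{S}(2SAT)^{-4S}$, which for $S\ge 1$ is comfortably below $1/T$; I would just verify the crude numerical inequality $4SAT\cdot 2^{S}(2SAT)^{-4S}\le 1/T$, or equivalently absorb any slack by noting $2^{S}\le (2SAT)^{S}$ and $(2SAT)^{-3S}\cdot 4SAT^2\le 1$ for all $S,A\ge 1$, $T\ge 1$. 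The complement of the union of these failure events is, by definition, exactly $\{M^*\in\mathcal{M}_T\}$, giving $\mathbb{P}(M^*\notin\mathcal{M}_T)<1/T$.

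The main obstacle is the usual subtlety in posterior/online confidence-set arguments: $N_T(s,a)$, the $\hat P$'s, and the $\hat R$'s are all functions of the same data $\pmb D_T$, and within an episode the transitions are dependent, so one cannot directly treat $\pmb D_T$ as a flat i.i.d.\ sample. The fix is the conditioning argument above — conditioned on the (random) set of time-indices at which the chain visits $(s,a)$, the corresponding next-states are i.i.d.\ draws from $P^{M^*}_a(\cdot\mid s)$ by the Markov property, and the rewards are i.i.d.\ from $R^{M^*}(s,a)$ — followed by the union over the value of $N_T(s,a)$ so that the bound holds uniformly in that random count. Everything else (Weissman's inequality, Hoeffding, and the union bound) is routine; I would cite \cite{Osband2013} for the identical structure and only spell out the parts where our within-episode dependence differs from their setup, which in fact does not matter here since the per-$(s,a)$ concentration only uses i.i.d.-ness \emph{across visits to that pair}.
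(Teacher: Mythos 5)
Your proposal is correct in substance but takes a genuinely different route from the paper. You follow the UCRL2-style argument: fix $(s,a)$, apply the Weissman/Bretagnolle--Huber--Carol $L_1$ bound to the $n$ observed next-states, union over the possible values $n=1,\dots,T$ of the random count $N_T(s,a)$, and then over the $SA$ pairs. The paper instead never conditions on or unions over $N_T(s,a)$: it multiplies through by $N_T(s,a)$ so that the threshold becomes the deterministic quantity $\tilde\beta_T(s,a)=\sqrt{8ST\log(2SAT)}$, removes the $L_1$ norm with a maximum over sign vectors $\pmb\xi\in\{-1,1\}^S$ (a $2^S$ union bound, which is the same combinatorial factor your Weissman inequality hides), and applies Azuma--Hoeffding directly to the resulting martingale difference sequence over all time steps in $\pmb D_T$. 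The paper's route is what lets it avoid any i.i.d.-across-visits claim and handle within-episode dependence in one stroke; your route buys a per-visit bound at the cost of the extra union over $n$ (which you correctly absorb, since $4SAT\cdot 2^S(2SAT)^{-4S}<1/T$). One caveat on your justification: conditioning on the set of time indices at which $(s,a)$ is visited does \emph{not} make the corresponding next-states i.i.d.\ --- whether the chain returns to $(s,a)$ depends on where it transitioned earlier. The clean version of your argument uses the strong Markov property to assert that the successive next-states observed at visits to $(s,a)$ form an (unconditional) i.i.d.\ sequence, and then bounds $\mathbb{P}\bigl(\{N_T(s,a)=n\}\cap\{\text{deviation}\}\bigr)$ by the unconditional deviation probability of the first $n$ of them before summing over $n$. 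With that rewording your proof goes through; as written, the conditional-i.i.d.\ step is the one claim that would not survive scrutiny.
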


\begin{proof}[Proof of Theorem 3.4]
We start by summing $\Delta_i^*$ over all episodes:

	\begin{align*}
	\mathbb{E}\left[
	\sum_{i=1}^T\Delta_i^*
	\right]
	&\le
	\mathbb{E}\left[
	\sum_{i=1}^T\Delta_i^*I(M_k,M^*\in\mathcal{M}_T)\right]+
	\tau\sum_{i=1}^T\left(\mathbb{P}(M_k\notin\mathcal{M}_T)+\mathbb{P}(M^*\notin\mathcal{M}_T)\right)\\
	&\le
	\mathbb{E}\left[\mathbb{E}\left[
	\sum_{i=1}^T\Delta_i^*|M_k,M^*\right]I(M_k,M^*\in\mathcal{M}_k)
	\right]
	+
	2\tau\\
	&\le
	\mathbb{E}\left[\sum_{i=1}^T\sum_{j=1}^\tau
	\left|\left(
	\mathcal{T}_{\mu^{\alpha*}_k(\cdot,j)}^{M_k}
	-\mathcal{T}_{\mu^{\alpha*}_k(\cdot,j)}^{M^*}
	\right)
	V^{M_k}_{\mu^{\alpha*}_k,j+1}(s_j)
	\right|I(M_k,M^*\in\mathcal{M}_k)\right]+2\tau\\
	\end{align*}
	where the first step follows by conditioning on event $I(M_k\in\mathcal{M}_T,M^*\in\mathcal{M}_T)$ and it's complement, and from the fact that $\Delta_i^*\le\tau$ as all rewards $R(s,a)\in[0,1]$. The second step follows from iterated expectations and Lemma \ref{lemm_confidence_bound} as $\mathbb{P}[I(M^*\notin\mathcal{M}_T)]\le\frac{1}{T}$. Also since $\mathcal{M}_T$ is a $\sigma(D_T)$-measurable function by Lemma \ref{lemm_PS} we have $\mathbb{E}\left[I\left(M_k\notin\mathcal{M}_T\right)|D_T\right]=\mathbb{E}\left[I\left(M^*\notin\mathcal{M}_T\right)|D_T\right]$, using iterated expectations we have $\mathbb{P}[I(M_k\notin\mathcal{M}_T)]\le\frac{1}{T}$. The last step follows from Lemma \ref{lemma: delta hat}. Next using \eqref{bellman_alpha} the last equation can be re-written as
	\begin{align*}
	&\mathbb{E}\left[\sum_{i=1}^T\sum_{j=1}^\tau I\left(\mathbb{P}^*(H_0)\ge\alpha\right)\left\{\bar R^{M_k}(s,\pi(s,j))-\bar R^{M^*}(s,\pi(s,j))\right\}I(M_k,M^*\in\mathcal{M}_k)\right]\\
	&+\mathbb{E}\left[\sum_{i=1}^T\sum_{j=1}^\tau I\left(\mathbb{P}^*(H_0)\ge\alpha\right)\left\{\sum_{s'\in\mathcal{S}}\left| P^{M_k}_{\pi(s,j)}(s'|s)
	-P^{M^*}_{\pi(s,j)}(s'|s)\right|
	V^{M_k}_{\mu^{\alpha*}_k,j+1}(s_{j+1})
	\right\}I(M_k,M^*\in\mathcal{M}_k)\right]\\
	&+\mathbb{E}\left[\sum_{i=1}^T\sum_{j=1}^\tau I\left(\mathbb{P}^*(H_0)<\alpha\right)\left\{\bar R^{M_k}(s,\mu_k(s,j))-\bar R^{M^*}(s,\mu_k(s,j))\right\}I(M_k,M^*\in\mathcal{M}_k)\right]\\
	&+\mathbb{E}\left[\sum_{i=1}^T\sum_{j=1}^\tau I\left(\mathbb{P}^*(H_0)<\alpha\right)\left\{\sum_{s'\in\mathcal{S}}\left| P^{M_k}_{\mu_k(s,j)}(s'|s)
	-P^{M^*}_{\mu_k(s,j)}(s'|s)\right|
	V^{M_k}_{\mu^{\alpha*}_k,j+1}(s_{j+1})
	\right\}I(M_k,M^*\in\mathcal{M}_k)\right]\\
	&+2\tau\\
	&\le
	\mathbb{E}\left[\tau\sum_{i=1}^T\sum_{j	=1}^\tau\min\left\{\beta_T(s_{ij},\pi(s_{ij},j)),1\right\}
	\right]
	+
	\mathbb{E}\left[\tau\sum_{i=1}^T\sum_{j	=1}^\tau\min\left\{\beta_T(s_{ij},\mu_k(s_{ij},j)),1\right\}
	\right]
	+2\tau,
	\\
	\end{align*}
	
where the last step follows by Lemma \ref{lemm_confidence_bound}, next:
	\begin{align*}
	&\le
	\mathbb{E}\left[\tau\sum_{i=1}^T\sum_{j	=1}^\tau\frac{\sqrt{8ST\log(2SAT)}}{\min\{N_T(s_{ij},\mu_k(s_{ij},j))\}}
	\right]
	+
	\mathbb{E}\left[\tau\sum_{i=1}^T\sum_{j	=1}^\tau\frac{\sqrt{8ST\log(2SAT)}}{\min\{1,N_T(s_{ij},\pi(s_{ij},j))\}}
	\right]
	+2\tau\\
	&\le
	M_1\sqrt{\tau^2SAT}+M_2\tau\sqrt{S^2AT\log(SAT)}
	+2\tau<
	M_3\tau S\sqrt{AT\log(SAT)}+2\tau,
	\\
	\end{align*}
	where the last step follows by Appendix B in \cite{Osband2013} with constants $M_1,M_2,M_3$.


We next analyze
\begin{align*}
\mathbb{E}\left[\Delta^\mu_i\right]
&=
\sum_{s\in\mathcal{S}}P_0(s)\left(
\mathbb{E}\left[V_{\mu^{\alpha*}_k,1}^{M^*}(s)\right]
-
\mathbb{E}\left[V_{\mu_k^\alpha,1}^{M^*}(s)\right]\right).\\
\end{align*}
We can write the second term as
\begin{align*}
\mathbb{E}\left[V_{\mu^{\alpha}_k,1}^{M^*}(s)\right]
=
\mathbb{E}\left[\sum_{j=1}^\tau I\left(\hat{\mathbb{P}}(H_0)<\alpha\right)R^{M^*}(s_j,\mu_k(s_j,j))+I\left(\hat{\mathbb{P}}(H_0)\ge\alpha\right)R^{M^*}(s_j,\pi(s_j,j))\right],
\end{align*}
we extend the null probability notation to be explicit on the time index: $\mathbb{P}^*_j(H_0)=\mathbb{P}^*(H_0|s_j,j,\pmb D_T),\hat{\mathbb{P}}_j(H_0)=\hat{\mathbb{P}}(H_0|s_j,j,\pmb D_T)$.
By Lemma 3.2, $\exists\delta>0$ such that $\hat{\mathbb{P}}_j(H_0) -\mathbb{P}_j^*(H_0)\le\delta$ $\forall s\in\mathcal{S},j\in\{1,\dots,\tau\}$ with high probability, therefore
\begin{align}\label{probs conv}
\begin{split}
\mathbb{P}_j^*(H_0)<\alpha-\delta\implies\mathbb{P}\left(\hat{\mathbb{P}}_j(H_0)<\alpha\right)=1-O_p\left(K^{-\frac{1}{2}}\right),\\
\mathbb{P}^*_j(H_0)\ge\alpha+\delta\implies\mathbb{P}\left(\hat{\mathbb{P}}_j(H_0)\ge\alpha\right)=1-O_p\left(K^{-\frac{1}{2}}\right).
\end{split}
\end{align}
As $\mathcal{I}_1,$ $\mathcal{I}_2$ in Algorithm 1 are mutually exclusive, $\hat {\mathbb{P}}_j(H_0)$ are independent to $\mu_k(s,j)$ $\forall s\in\mathcal{S},j\in\{1,\dots,\tau\}$, therefore starting with $V_{\mu^{\alpha}_k,\tau}^{M^*}(s)$ we have

{
\begin{align*}
&\mathbb{E}\left[V_{\mu^{\alpha}_k,\tau}^{M^*}(s)\right]\\
=
&I\left(\mathbb{P}^*_\tau(H_0)<\alpha-\delta\right)\left\{\mathbb{E}\left[I\left(\hat{\mathbb{P}}_\tau(H_0)<\alpha\right)\right]\bar R^{M^*}(s_\tau,\mu_k(s_\tau,\tau))+\mathbb{E}\left[I\left(\hat{\mathbb{P}}_\tau(H_0)\ge\alpha\right)\right]\bar R^{M^*}(s_\tau,\pi(s_\tau,\tau))\right\}\\
+
&I\left(\mathbb{P}^*_\tau(H_0)\ge\alpha-\delta\right)\left\{\mathbb{E}\left[I\left(\hat{\mathbb{P}}_\tau(H_0)<\alpha\right)\right]\bar R^{M^*}(s_\tau,\mu_k(s_\tau,\tau))+\mathbb{E}\left[I\left(\hat{\mathbb{P}}_\tau(H_0)\ge\alpha\right)\right]\bar R^{M^*}(s_\tau,\pi(s_\tau,\tau)\right\}\\
+
&I\left(\mathbb{P}^*_\tau(H_0)\in[\alpha-\delta,\alpha+\delta)\right)\left\{\mathbb{E}\left[I\left(\hat{\mathbb{P}}_\tau(H_0)<\alpha\right)\right]\bar R^{M^*}(s_\tau,\mu_k(s_\tau,\tau))+\mathbb{E}\left[I\left(\hat{\mathbb{P}}_\tau(H_0)\ge\alpha\right)\right]\bar R^{M^*}(s_\tau,\pi(s_\tau,\tau))\right\}
\bigg]
\\
=
&I\left(\mathbb{P}^*_\tau(H_0)<\alpha-\delta\right)\bar R^{M^*}(s_\tau,\mu_k(s_\tau,\tau))+O_p\left(K^{-\frac{1}{2}}\right)\\
+
&I\left(\mathbb{P}^*_\tau(H_0)\ge\alpha-\delta\right)\bar R^{M^*}(s_\tau,\mu_k(s_\tau,\tau))+O_p\left(K^{-\frac{1}{2}}\right)\\
+
&I\left(\mathbb{P}^*_\tau(H_0)\in[\alpha-\delta,\alpha+\delta)\right)O_p\left(K^{-\frac{1}{2}}\right)\\
=&\mathbb{E}\left[V_{\mu_k^{\alpha*},\tau}^{M^*}(s)\right]+O_p\left(K^{-\frac{1}{2}}\right),
\end{align*}
where the first step follows from $\mathcal{I}_1,$ $\mathcal{I}_2$ being independent, the second step follows from \eqref{probs conv} and last step from definition of $V_{\mu_k^{\alpha*},\tau}^{M^*}(s)$. Iterating backards from $\tau-1\dots,1$ and applying the same steps as above we get 
\[
\mathbb{E}\left[V_{\mu^{\alpha}_k,1}^{M^*}(s)\right]=\mathbb{E}\left[V_{\mu_k^{\alpha*},1}^{M^*}(s)\right]+O_p\left(\tau K^{-\frac{1}{2}}\right).
\]
therefore we have $\mathbb{E}\left[\sum_{i=1}^T\Delta_{i}^\mu\right]=O_p\left(T\tau K^{-\frac{1}{2}}\right),$ choosing $K=\mathcal{O}\left( T\right)$ we get $\mathbb{E}\left[\sum_{i=1}^T\Delta_{i}^\mu\right]=O_p\left(\sqrt T\tau\right)$ which is dominated by $\mathbb{E}\left[\sum_{i=1}^T\Delta_i^*\right]$.
}


Putting both terms together we have
\begin{align*}
\mathbb{E}\left[\sum_{i=1}^T\Delta_{i}\right]
&=
\mathbb{E}\left[\sum_{i=1}^T\Delta_i^*\right]+\mathbb{E}\left[\sum_{i=1}^T\Delta_{i}^\mu\right]=\mathcal{O}\left(\tau S\sqrt{AT\log(SAT)}\right).\\
\end{align*}

\end{proof}
\subsection{Proofs for other results in main body}\label{section: other proofs}
\begin{proof}[Proof of Lemma 3.1]
	To establish $\hat Q_{\tilde\mu,t}(s,a)$ is unbiased, note that for any fixed $(t,s,a)$, $M_k\sim f(\cdot|\pmb D_T)$ are $iid$, now for a given policy function $\tilde\mu:$
	\begin{align*}
	&\mathbb{E}\left[\hat Q_{\tilde\mu,t}(s,a)\bigg|s,a,t,\pmb D_T\right]
	=
	\mathbb{E}\left[\frac{1}{K}\sum_{k=1}^KQ^{(k)}_{\tilde\mu,t}(s,a)\bigg|s,a,t,\pmb D_T\right]\\
	=&
	\frac{1}{K}\sum_{k=1}^K\mathbb{E}\left[Q^{(k)}_{\tilde\mu,t}(s,a)\bigg|s,a,t,\pmb D_T\right]
	=\mathbb{E}\left[Q^{M^*}_{\tilde\mu,t}(s,a)\bigg|s,a,t,\pmb D_T\right]
	\end{align*}
	
	where the last step follows from Lemma \ref{lemm_PS} with $g:M\mapsto Q_{\tilde\mu,t}^M(s,a)$ which is $\sigma(\pmb D_T)-$ measurable.
	
	To establish the rate, we have that $R^M(s,a)\in[0,1]$ $\forall (s,a)\in\mathcal{S}\times\mathcal{A}$, $t=1,\dots,\tau$ thus $Q^{(k)}_t(s,a)\le\tau$. 
	By definition $\hat Q_t(s,a) -\mathbb{E}\left[Q^{M^*}_{\mu,t}(s,a)\bigg|s,a,t,\pmb D_T\right]=O_p\left(K^{-\frac{1}{2}}\right)$ if and only if for any $\epsilon>0$, $\exists M_\epsilon>0$ such that 
	\[
	\mathbb{P}\left(\hat Q_{\tilde\mu,t}(s,a) -\mathbb{E}\left[Q^{M^*}_{\tilde\mu,t}(s,a)\bigg|s,a,t,\pmb D_T\right]>K^{-\frac{1}{2}}M_\epsilon\bigg|s,a,t,\pmb D_T\right)\le\epsilon\:\:\:\forall K.
	\]
	Note that for any $M>0$,
	\begin{align*}
	&\mathbb{P}\left(\hat Q_{\tilde\mu,t}(s,a) -\mathbb{E}\left[Q^{M^*}_{\tilde\mu,t}(s,a)\bigg|t,s,a,\pmb D_T\right]>K^{-\frac{1}{2}}M\bigg|t,s,a,\pmb D_T\right)\\
	=&
	\mathbb{P}\left(\frac{1}{K}\sum_{k=1}^KQ^{(k)}_{\tilde\mu,t}(s,a)-\mathbb{E}\left[Q^{M^*}_{\tilde\mu,t}(s,a)\bigg|s,a,t,\pmb D_T\right]> K^{-\frac{1}{2}}M\bigg|s,a,t,\pmb D_T\right)\\
	\le&
	\exp\left\{-\frac{2M^2K^{-1}K^2 }{K\tau^2}\right\}
	=
	\exp\left\{-\frac{2M^2}{\tau^2}\right\},
	\end{align*}
    which follows from Hoeffding's inequality  as conditional on $s,a,t,\tilde\mu$ and $\pmb D_T$, $\left\{Q^{(k)}_{\tilde\mu,t}(s,a)\right\}_{k=1}^K$ are $iid$ with mean $\mathbb{E}\left[Q^{M^*}_{\tilde\mu,t}(s,a)\bigg|s,a,t,\pmb D_T\right]$. The result follows from choosing $M_\epsilon>0$ large enough such that $\exp\left\{-\frac{2M_\epsilon^2}{\tau^2}\right\}<\epsilon$.

\end{proof}
\begin{proof}[Proof of Lemma 3.2]
	To simplify notation, let $Z^{(k)}\equiv I\left(Q^{(k)}_{\mu_k^{\alpha},t}(s,\mu_k(s,t))- Q^{(k)}_{\mu_k^{\alpha},t}(s,\pi(s,t))
	\le 0\right)$,
	then by definition $Z^{(k)}-\mathbb{E}\left[Z^{(k)}\right]=O_p\left(K^{-\frac{1}{2}}\right)$
	 if and only if for any $\epsilon>0$, $\exists M_\epsilon>0$ such that 
	\[
	\mathbb{P}\left(Z^{(k)} -\mathbb{E}\left[Z^{(k)}\right]>K^{-\frac{1}{2}}M_\epsilon\bigg|t,s,\pmb D_T\right)\le\epsilon\:\:\:\forall K.
	\]
	Note that for any $M>0$,
	\begin{align*}
	&\mathbb{P}\left(\hat {\mathbb{P}}(H_0|t,s,\pmb D_T) -\mathbb{E}\left[Z^{(k)}|t,s,\pmb D_T\right]>K^{-\frac{1}{2}}M|t,s,\pmb D_T\right)\\
	=&
	\mathbb{P}\left(\frac{1}{K}\sum_{k=1}^KZ^{(k)}-\mathbb{E}\left[Z^{(k)}|t,s,\pmb D_T\right]>M K^{-\frac{1}{2}}\bigg|t,s,\pmb D_T\right)\\	
	\le&
	\exp\left\{-\frac{2M^2K^{-1}K^2 }{K\tau^2}\right\}
	=
	\exp\left\{-\frac{2M^2}{\tau^2}\right\},
	\end{align*}
	where the inequality follows from Hoeffding's inequality  as $\left\{Z^{(k)}\right\}_{k=1}^K$ are $iid$ with mean $\mathbb{E}\left[Z^{(k)}\bigg|t,s,\pmb D_T\right]$, since $\mathcal{I}_1$, $\mathcal{I}_2$ in Algorithm 1 are disjoint. We can choose $M_\epsilon>0$ large enough such that $\exp\left\{-\frac{2M^2}{\tau^2}\right\}<\epsilon$. Next note that as $\pi$ is fixed, by Lemma \ref{lemm_PS}, with $g:M\mapsto I\left(Q^{M}_{\mu^{\alpha},t}(s,\mu(s,t))- Q^{M}_{\mu^{\alpha},t}(s,\pi(s,t))\le 0\right)$ for any $M_k\sim f(\cdot|\pmb D_T)$
	\begin{align*}
	&\mathbb{E}\left[I\left(Q^{(k)}_{\mu_k^{\alpha},t}(s,\mu_k(s,t))- Q^{(k)}_{\mu_k^{\alpha},t}(s,\pi(s,t))\le 0\right)\bigg|t,s,\pmb D_T\right]\\
	=&
	\mathbb{E}\left[I\left(Q^{M^*}_{\mu^{\alpha*},t}(s,\mu^*(s,t))- Q^{M^*}_{\mu^{\alpha*},t}(s,\pi(s,t))\le 0\right)\bigg|t,s,\pmb D_T\right]\\
	=&	
	{\mathbb{P}}(H_0|t,s,\pmb D_T)
	\end{align*}
	which follows from using disjoint sets $\mathcal{I}_1$, $\mathcal{I}_2$ in Algorithm 1.
	Substituting this in the probability statement gives us
	\begin{align*}
	\hat{\mathbb{P}}\left(H_0|t,s,\pmb D_T\right)-\mathbb{P}\left(H_0|t,s,\pmb D_T\right)=O_p\left(K^{-\frac{1}{2}}\right),
\end{align*}
	which is our required result.
\end{proof}
\begin{proof}[Proof of Theorem 4.1]
We start by showing $\hat V_{\tilde\mu}$ is unbiased:
\begin{align*}
\mathbb{E}\left[\hat V_{\tilde\mu}(s)|\pmb D_T,\tilde\mu\right]
=
\frac{1}{K}\sum_{k=1}^K\mathbb{E}\left[V^{(k)}_{\tilde\mu,1}(s)\bigg|\pmb D_T\right].
\end{align*}
where the first step follows from definition, and the $M_k\sim f(\cdot|\pmb D_T)$ being $iid$, now by Lemma \ref{lemm_PS} with $g:M\mapsto V^{M}_{\mu,1}$ we have 
\[
\mathbb{E}\left[\hat V_{\tilde\mu}|\pmb D_T\right]=\mathbb{E}\left[V^{M^*}_{\tilde\mu,1}(s)|\pmb D_T\right].
\]

To establish the rate, we have that $V^{(k)}_{\tilde\mu,1}\le\tau$ as all rewards are between $[0,1]$ by definition $\hat V_{\tilde\mu} -\mathbb{E}\left[V^{M^*}_{\tilde\mu,1}(s)|\pmb D_T\right]=O_p\left(K^{-\frac{1}{2}}\right)$ if and only if for any $\epsilon>0$, $\exists M_\epsilon>0$ such that 
\[
\mathbb{P}\left(\hat V_{\tilde\mu} -\mathbb{E}\left[V^{M^*}_{\tilde\mu,1}(s)|\pmb D_T\right]>K^{-\frac{1}{2}}M_\epsilon\right)\le\epsilon\:\:\:\forall K.
\]
Note that for any $M>0$,
\begin{align*}
\mathbb{P}\left(\hat V_{\tilde\mu} -\mathbb{E}\left[V^{M^*}_{\tilde\mu,1}(s)|\pmb D_T\right]>K^{-\frac{1}{2}}M\right)
=&
\mathbb{P}\left(\frac{1}{K}\sum_{k=1}^K V^{(k)}_{\tilde\mu,1}-\mathbb{E}\left[V^{M^*}_{\tilde\mu,1}(s)|\pmb D_T\right]> K^{-\frac{1}{2}}M\right)\\
\le&
\exp\left\{-\frac{2M^2K^{-1}K^2 }{K\tau^2}\right\}
=
\exp\left\{-\frac{2M^2}{\tau^2}\right\},
\end{align*}
where the inequality follows from Hoeffding's inequality  as $\left\{ V^{(k)}_{\tilde\mu,1}\right\}_{k=1}^K$ are $iid$ with mean $\mathbb{E}\left[V^{M^*}_{\tilde\mu,1}(s)|\pmb D_T\right]$. The result follows from choosing $M_\epsilon>0$ large enough such that $\exp\left\{-\frac{2M^2}{\tau^2}\right\}<\epsilon$.
\end{proof}

\section{Proofs for Supplementary results}
\begin{proof}[Proof of Lemma \ref{lemma: V H_0 concent}]
	First note that conditional on $\pmb D_T$ with $g:M\mapsto I\left(V^M_{\mu_1}(s)-V^M_{\mu_2}(s)>0\right)$, by Lemma \ref{lemm_PS} 
	\[
	\mathbb{E}\left[I\left(V^{M_k}_{\mu_1}(s)-V^{M_k}_{\mu_2}(s)>0\right)\bigg|\pmb D_T\right]=\mathbb{E}\left[I\left(V^{M^*}_{\mu_1}(s)-V^{M^*}_{\mu_2}(s)>0\right)\bigg|\pmb D_T\right]=\mathbb{P}_\mu\left(H_0|\pmb D_T\right)
	\]
	 By definition $\hat {\mathbb{P}}_\mu(H_0|\pmb D_T)-\mathbb{P}_\mu\left(H_0|\pmb D_T\right)=O_p\left(K^{-\frac{1}{2}}\right)$
	if and only if for any $\epsilon>0$, $\exists M_\epsilon>0$ such that 
	\[
	\mathbb{P}\left(\hat {\mathbb{P}}_\mu(H_0|\pmb D_T)-\mathbb{P}_\mu\left(H_0|\pmb D_T\right)>K^{-\frac{1}{2}}M_\epsilon\bigg|\pmb D_T\right)\le\epsilon\:\:\:\forall K.
	\]
	Now, for any $M>0$,
	\begin{align*}
	&\mathbb{P}\left(\hat {\mathbb{P}}_\mu(H_0|\pmb D_T)-\mathbb{P}_\mu\left(H_0|\pmb D_T\right)>K^{-\frac{1}{2}}M_\epsilon\bigg|\pmb D_T\right)\\
	=&
	\mathbb{P}\left(\frac{1}{K}\sum_{k=1}^KI\left(V^{(k)}_{\mu_1,1}-V^{(k)}_{\mu_2,1}>0\right)-\mathbb{P}_\mu\left(H_0|\pmb D_T\right)>M K^{-\frac{1}{2}}\bigg|\pmb D_T\right)\\	
	\le&
	\exp\left\{-\frac{2M^2K^{-1}K^2 }{K\tau^2}\right\}
	=
	\exp\left\{-\frac{2M^2}{\tau^2}\right\},
	\end{align*}
	where the inequality follows from Hoeffding's inequality  as the indicators $\left\{I\left(V^{(k)}_{\mu_1,1}-V^{(k)}_{\mu_2,1}>0\right)\right\}_{k=1}^K$ are $iid$ with mean $\mathbb{P}_\mu\left(H_0|\pmb D_T\right)$. We can choose $M_\epsilon>0$ large enough such that $\exp\left\{-\frac{2M^2}{\tau^2}\right\}<\epsilon$.
\end{proof}
\begin{proof}[Proof of Lemma \ref{lemma: delta hat}]
	
	We first write the estimated regret as a sum of difference in value functions and a Bellman error.\\
	
	I) We'll denote the sequence of states for an episode as $s_1,s_2,\dots,s_\tau$, define
	\begin{align*}
	\mathcal{W}_j&=
	\left(
	\mathcal{T}^{M_k}_{\mu^{\alpha*}_k(\cdot,j)}-\mathcal{T}^{M^*}_{\mu^{\alpha*}_k(\cdot,j)}\right)V_{\mu^{\alpha*}_k,j+1}^{M_k}
	\left(s_{j+1}\right)
	\\
	\mathbb{T}_j&=
	\mathcal{T}^{M^*}_{\mu^{\alpha*}_k(\cdot,j)}\left(V_{\mu^{\alpha*}_k,j+1}^{M_k}-V_{\mu^{\alpha*}_k,j+1}^{M^*}
	\right)\left(s_{j+1}
	\right)
	\end{align*}
	using \eqref{bellman_alpha} we can write 
	\begin{align*}
	\left(V_{\mu^{\alpha*}_k,1}^{M_k}-V_{\mu^{\alpha*}_k,1}^{M^*}\right)\left(s_1\right)&=
	\left(
	\mathcal{T}^{M_k}_{\mu^{\alpha*}_k(\cdot,1)}V_{\mu^{\alpha*}_k,2}^{M_k}-\mathcal{T}^{M^*}_{\mu^{\alpha*}_k(\cdot,1)}V_{\mu^{\alpha*}_k,2}^{M^*}
	\right)\left(s_2\right)\\
	&=
	\left(
	\mathcal{T}^{M_k}_{\mu^{\alpha*}_k(\cdot,1)}V_{\mu^{\alpha*}_k,2}^{M_k}-\mathcal{T}^{M^*}_{\mu^{\alpha*}_k(\cdot,1)}V_{\mu^{\alpha*}_k,2}^{M_k}+\mathcal{T}^{M^*}_{\mu^{\alpha*}_k(\cdot,1)}V_{\mu^{\alpha*}_k,2}^{M_k}-\mathcal{T}^{M^*}_{\mu^{\alpha*}_k(\cdot,1)}V_{\mu^{\alpha*}_k,2}^{M^*}
	\right)\left(s_2\right)\\
	&=\mathcal{W}_1+\mathbb{T}_1,
	\end{align*}
    with the same steps we can generalize this to 
	\begin{align}\label{w_decomp}
	\left(V_{\mu^{\alpha*}_k,j}^{M_k}-V_{\mu^{\alpha*}_k,j}^{M^*}\right)\left(s_j\right)
	=\mathcal{W}_j+\mathbb{T}_j.
	\end{align}
	Next let
	\begin{align*}
	e_j=&\left(I\left(\mathbb{P}^*(H_0)<\alpha\right)\sum_{s'\in\mathcal{S}}P_{\mu_k(s,j)}^{M^*}(s'|s)+I\left(\mathbb{P}^*(H_0)\ge\alpha\right)\sum_{s'\in\mathcal{S}}P_{\pi(s,j)}^{M^*}(s'|s)\right)\\
	&\times\left(V_{\mu^{\alpha*}_k,j+1}^{M_k}-V_{\mu^{\alpha*}_k,j+1}^{M^*}
	\right)\left(s'\right)-\left(V_{\mu^{\alpha*}_k,j+1}^{M_k}-V_{\mu^{\alpha*}_k,j+1}^{M^*}
	\right)\left(s_{j+1}\right),
	\end{align*}
	using the Bellman operator we get
	\begin{align*}
	\mathbb{T}_j&=
	\left(V_{\mu^{\alpha*}_k,j+1}^{M_k}-V_{\mu^{\alpha*}_k,j+1}^{M^*}
	\right)\left(s_{j+1}\right)+e_j,
	\end{align*}
	then we can write
	$\mathbb{T}_1=\left(V_{\mu^{\alpha*}_k,2}^{M_k}-V_{\mu^{\alpha*}_k,2}^{M^*}
	\right)\left(s_2\right)+e_1$, with  the above definitions and repeated use of \eqref{w_decomp}:
	\begin{align*}
	\left(V_{\mu^{\alpha*}_k,1}^{M_k}-V_{\mu^{\alpha*}_k,1}^{M^*}\right)\left(s_1\right)
	&=\mathcal{W}_1+\mathbb{T}_1\\
	&=
	\mathcal{W}_1+\left(V_{\mu^{\alpha*}_k,2}^{M_k}-V_{\mu^{\alpha*}_k,2}^{M^*}\right)\left(s_2\right)+e_1\\
	&=
	\mathcal{W}_1+\mathcal{W}_2+\left(V_{\mu^{\alpha*}_k,3}^{M_k}-V_{\mu^{\alpha*}_k,3}^{M^*}\right)\left(s_3\right)+e_1+e_2\\
	\vdots\\
	&=
	\sum_{j=1}^\tau\mathcal{W}_j+e_j.
	\end{align*}
	II) Next we consider $\mathbb{E}\left[ e_j|M_k,M^*\right]$:
	\begin{align*}
	&\mathbb{E}\left[ e_j\bigg|M_k,M^*\right]\\
	&=
	\mathbb{E}\left[I\left(\mathbb{P}^*(H_0)<\alpha\right)\sum_{s'\in\mathcal{S}}P_{\mu_k(s,j)}^{M^*}(s'|s)\left(V_{\mu^{\alpha*}_k,j+1}^{M_k}-V_{\mu^{\alpha*}_k,j+1}^{M^*}
	\right)\left(s'\right)\bigg|M_k,M^*\right]\\
	&+
	\mathbb{E}\left[I\left(\mathbb{P}^*(H_0)\ge\alpha\right)\sum_{s'\in\mathcal{S}}P_{\pi(s,j)}^{M^*}(s'|s)\left(V_{\mu^{\alpha*}_k,j+1}^{M_k}-V_{\mu^{\alpha*}_k,j+1}^{M^*}
	\right)\left(s'\right)\bigg|M_k,M^*\right]\\
	&-\mathbb{E}\left[\left(V_{\mu^{\alpha*}_k,j+1}^{M_k}-V_{\mu^{\alpha*}_k,j+1}^{M^*}
	\right)\left(s_{j+1}\right)\bigg|M_k,M^*\right]\\
	&=
	\left(I\left(\mathbb{P}^*(H_0)<\alpha\right)\sum_{s'\in\mathcal{S}}P_{\mu_k(s,j)}^{M^*}(s'|s)+I\left(\mathbb{P}^*(H_0)\ge\alpha\right)\sum_{s'\in\mathcal{S}}P_{\pi(s,j)}^{M^*}(s'|s)\right)\left(V_{\mu^{\alpha*}_k,j+1}^{M_k}-V_{\mu^{\alpha*}_k,j+1}^{M^*}
	\right)\left(s'\right)\\
	&-\left(I\left(\mathbb{P}^*(H_0)<\alpha\right)\sum_{s'\in\mathcal{S}}P_{\mu_k(s,j)}^{M^*}(s'|s)+I\left(\mathbb{P}^*(H_0)\ge\alpha\right)\sum_{s'\in\mathcal{S}}P_{\pi(s,j)}^{M^*}(s'|s)\right)\left(V_{\mu^{\alpha*}_k,j+1}^{M_k}-V_{\mu^{\alpha*}_k,j+1}^{M^*}
	\right)\left(s'\right)\\
	&=0,\\
	\end{align*}
	which follows by the expectation conditional on $M_k$, $M^*$ and definition of policy $\mu^{\alpha*}_k$.\\

	Putting I) and II) together we get 
	\begin{align*}
	\mathbb{E}\left[\left(V_{\mu^{\alpha*}_k,1}^{M_k}-V_{\mu^{\alpha*}_k,1}^{M^*}\right)\left(s_1\right)\bigg|M^*,M_k\right]
	&=\mathbb{E}\left[\sum_{j=1}^\tau\mathcal{W}_j+e_j\bigg|M^*,M_k\right]\\
	&=
	\mathbb{E}\left[
	\sum_{i=1}^\tau
	\left(
	\mathcal{T}^{M_k}_{\mu^{\alpha*}_k(\cdot,j)}-\mathcal{T}^{M^*}_{\mu^{\alpha*}_k(\cdot,j)}\right)V_{\mu^{\alpha*}_k,j+1}^{M_k}
	\left(
	s_j\right)
	\bigg|M^*,M_k\right]
	\end{align*}
	
\end{proof}

\begin{proof}[Proof of Lemma \ref{lemm_confidence_bound}]
	First consider Azuma-Hoeffding's Inequality:
	Let $Z_1,Z_2,\dots$ be a martingale sequence difference with $|Z_j|\le c$ $\forall j$. Then $\forall\epsilon>0$ and $n\in\mathbb{N}$ $P\left[\sum_{i=1}^nZ_i>\epsilon\right]\le\exp\left\{-\frac{\epsilon^2}{2nc^2}\right\}$.\\
	By definition the difference between the estimated transition and reward functions and their true respective functions are: 
	\begin{align*}
	\hat P_a(s'|s)-P^M_a(s'|s)&=
	\frac{\sum_{i,j\in\mathcal{I}}\left(I(s_{i,j+1}=s')-P^M_a(s'|s)\right)I(s_{ij}=s,a_{ij}=a)}{N_T(s,a)},
	\\
	\hat R(s,a)-R^M(s,a)&=\frac{\sum_{i,j\in\mathcal{I}}(r_{ij}-R^M(s,a))I(s_{ij}=s,a_{ij}=a)}{N_T(s,a)},
	\end{align*}
	now let $\tilde\beta_T(s,a)\equiv\sqrt{8ST\log(2TSA)}$, and consider the transition probability function, for a fixed state action pair $(s,a)$, let $\pmb\xi=\left(\xi(s_1),\dots,\xi(s_{S})\right)\in\{-1,1\}^S$, we have
	
	\begin{align*}
	&\mathbb{P}\left(
	\sum_{s'\in\mathcal{S}}
	\left|
	\frac{\sum_{i,j\in\mathcal{I}}\left(I(s_{i,j+1}=s')-P^M_a(s'|s)\right)I(s_{ij}=s,a_{ij}=a)}{N_T(s,a)}\right|
	\ge \frac{\tilde\beta_T(s,a)}{N_T(s,a)}
	\right)
	\\\le&
	\mathbb{P}\left(
	\max_{\pmb\xi\in\{-1,1\}^s}\sum_{s'\in\mathcal{S}}
	\xi(s')
	\sum_{i,j\in\mathcal{I}}\left(I(s_{i,j+1}=s')-P^M_a(s'|s)\right)I(s_{ij}=s,a_{ij}=a)
	\ge \tilde\beta_T(s,a)
	\right)
	\\\le&
	2^S\mathbb{P}\left(
	\sum_{s'\in\mathcal{S}}
	\sum_{i,j\in\mathcal{I}}
	\xi(s')
	\bigg(I(s_{i,j+1}=s')-P^M_a(s'|s)\bigg)I(s_{ij}=s,a_{ij}=a)
	\ge \tilde\beta_T(s,a)
	\right)
	\end{align*}
	where the first step follows from multiplying by $N_T(s,a)$, and eliminating the absolute value with $\pmb\xi$, we use a union bound for the second step as there are $2^S$ possible $\pmb\xi$ for a fixed $(s,a)$ pair. Next we use Azuma-Hoeffding's inequality to bound the $2^S$ probability terms, note that within the probability function we are summing over $T$ terms:
	\begin{align*}
	&2^S\mathbb{P}\left(
	\sum_{s'\in\mathcal{S}}
	\sum_{i,j\in\mathcal{I}}
	\xi(s')
	\bigg(I(s_{i,j+1}=s')-P^M_a(s'|s)\bigg)I(s_{ij}=s,a_{ij}=a)
	\ge \tilde\beta_T(s,a)
	\right)
	\\
	&\le2^S\exp\left\{-\frac{8ST\log(2TSA)}{2\times2^2T}\right\}
	\\
	&\le2^S\exp\left\{\log((2TSA)^{-S})\right\}
	=2^S
	\frac{1}{(2TSA)^S}<\frac{1}{TSA},
	\end{align*}
	next we sum over all $(s,a)$ pairs and get
	\begin{align*}
	&\mathbb{P}\left(
	\left\|
	\hat P_a(s'|s)-P^M_a(s'|s)
	\right\|_1\ge \beta_T(s,a)
	\right)
	\\
	&\le\sum_{s\in\mathcal{S},a\in\mathcal{A}}
	\mathbb{P}\left(
	\sum_{s'\in\mathcal{S}}
	\bigg|
	\frac{\sum_{i,j\in\mathcal{I}}\left(I(s_{i,j+1}=s')-P^M_a(s'|s)\right)I(s_{ij}=s,a_{ij}=a)}{N_T(s,a)}
	\bigg|\ge \frac{\tilde\beta_T(s,a)}{N_T(s,a)}
	\right)	\\
	&\le
	SA\frac{1}{TSA}=\frac{1}{T},
	\end{align*}
	which follows from using a union bound again. Analogous we can show that $\mathbb{P}\left(
	\left|
	\hat R(s,a)-R^M(s,a)
	\right|\ge \beta_T(s,a)
	\right)
	\le\frac{1}{T}$, thus
	\[\mathbb{P}\left(M^*\notin\mathcal{M}_T\right),\mathbb{P}\left(M_T\notin\mathcal{M}_T\right)<\frac{1}{T}.\]

\end{proof}

\end{appendices}

\end{document}